\definecolor{Gray}{gray}{0.9}
\begin{document}

%%
%% The "title" command has an optional parameter,
%% allowing the author to define a "short title" to be used in page headers.

\title{Fast Causal Discovery by Approximate Kernel-based Generalized Score Functions with Linear Computational Complexity}

\author{Yixin Ren}
\email{yxren21@m.fudan.edu.cn}
\affiliation{%
\institution{Fudan University}
\state{Shanghai}
\country{China}}
\authornotemark[1]

\author{Haocheng Zhang}
\email{hczhang23@m.fudan.edu.cn}
\affiliation{%
\institution{Fudan University}
\state{Shanghai}
\country{China}}
\authornote{Co-first authors.}

\author{Yewei Xia}
\email{ywxia23@.m.fudan.edu.cn}
\affiliation{%
\institution{Fudan University}
\state{Shanghai}
\country{China}}

\author{Hao Zhang}
\email{h.zhang10@siat.ac.cn}
\affiliation{%
    \institution{SIAT, Chinese Academy of Sciences}
\state{Shenzhen}
\country{China}}
\authornotemark[2]

\author{Jihong Guan}
\email{jhguan@tongji.edu.cn}
\affiliation{%
\institution{Tongji University}
\state{Shanghai}
\country{China}}

\author{Shuigeng Zhou}
\email{sgzhou@fudan.edu.cn}
\affiliation{%
\institution{Fudan University}
\state{Shanghai}
\country{China}}
\authornote{Corresponding authors.}

\renewcommand{\shortauthors}{Yixin Ren et al.}
%%
%% The abstract is a short summary of the work to be presented in the
%% article.
\begin{abstract}
Score-based causal discovery methods can effectively identify causal relationships by evaluating candidate graphs and selecting the one with the highest score. One popular class of scores is kernel-based generalized score functions, which can adapt to a wide range of scenarios and work well in practice because they circumvent assumptions about causal mechanisms and data distributions. Despite these advantages, kernel-based generalized score functions pose serious computational challenges in time and space, with a time complexity of $\mathcal{O}(n^3)$ and a memory complexity of $\mathcal{O}(n^2)$, where $n$ is the sample size. In this paper, we propose an approximate kernel-based generalized score function with $\mathcal{O}(n)$ time and space complexities by using low-rank technique and designing a set of rules to handle the complex composite matrix operations required to calculate the score, as well as developing sampling algorithms for different data types to benefit the handling of diverse data types efficiently. Our extensive causal discovery experiments on both synthetic and real-world data demonstrate that compared to the state-of-the-art method, our method can not only significantly reduce computational costs, but also achieve comparable accuracy, especially for large datasets.
\end{abstract}

%%
%% The code below is generated by the tool at http://dl.acm.org/ccs.cfm.
%% Please copy and paste the code instead of the example below.
%%
\begin{CCSXML}
<ccs2012>
   <concept>
       <concept_id>10010147.10010178.10010187.10010192</concept_id>
       <concept_desc>Computing methodologies~Causal reasoning and diagnostics</concept_desc>
       <concept_significance>500</concept_significance>
       </concept>
 </ccs2012>
\end{CCSXML}

\ccsdesc[500]{Computing methodologies~Causal reasoning and diagnostics}

%%
%% Keywords. The author(s) should pick words that accurately describe
%% the work being presented. Separate the keywords with commas.
\keywords{Causal Discovery, Score Function, Low-rank Approximation}
%% A "teaser" image appears between the author and affiliation
%% information and the body of the document, and typically spans the
%% page.
% \begin{teaserfigure}
%   \includegraphics[width=\textwidth]{}
%   \caption{Seattle Mariners at Spring Training, 2010.}
%   \Description{Enjoying the baseball game from the third-base
%   seats. Ichiro Suzuki preparing to bat.}
%   \label{fig:teaser}
% \end{teaserfigure}

% \received{20 February 2007}
% \received[revised]{12 March 2009}
% \received[accepted]{5 June 2009}

%%
%% This command processes the author and affiliation and title
%% information and builds the first part of the formatted document.
\maketitle

\section{Introduction}
The capacity to understand causality is a key aspect of human-level intelligence~\cite{pearl2018theoretical}. Consequently, automatic identification of causal relationships by data analysis is essential for achieving strong AI~\cite{pearl2018book}. Effective knowledge transfer and generalization by machines also depend on their abilities to comprehend the causal connections within data~\cite{scholkopf2021toward, zhang2024causal}. Despite its importance, identifying causal relationships among random variables remains a long-standing challenge in artificial intelligence. While randomized experiments are considered the gold standard for causal analysis, their applications are often limited by ethical and time-cost constraints~\cite{spirtes2016causal}. Therefore, there is a growing interest in developing computational methods to infer causal structures from observational data. Recently, score-based methods have emerged as a promising tool, facilitating the rapid progress in causal discovery~\cite{zhang2018learning,vowels2022d,zheng2024causal}. 

Score-based methods evaluate the quality of candidate causal graphs using predefined score functions. By combining these scores with various search strategies, such as combinatorial search-based methods~\cite{chickering2002optimal, chickering2020statistically, tsamardinos2006max, yuan2013learning} and continuous optimization-based methods~\cite{zheng2018dags, ng2020role, yu2019dag, wei2020dags, zheng2020learning, zhang2022truncated, sun2021nts, xia2023causal, ng2024structure}, these algorithms can output the optimal causal graph or equivalent class. Up to now, various score functions~\cite{hyvarinen2013pairwise, buhlmann2014cam} have been proposed for this purpose, but many of them rely on strong assumptions about the underlying data generation models. For example, the BIC score~\cite{schwarz1978estimating} and the BGe score~\cite{geiger1994learning} assume linear-Gaussian models while the BDeu/BDe score~\cite{buntine1991theory, heckerman1995learning} is only applicable to discrete data. These assumptions restrict their use in real-world scenarios, and the model misspecification problem may give rise to misleading results. 

To handle a wider variety of causal relationships and data distributions, kernel-based scores~\citep{bach2002learning,huang2018generalized, wangoptimal} were proposed. These scores utilize the characterization of kernels~\citep{scholkopf2002learning}, making them highly adaptable to various datasets. Among them, the generalized score functions~\cite{huang2018generalized} are particularly effective in practice. These scores are based on the characterization of general (conditional) independence in reproducing kernel Hilbert spaces (RKHSs)~\cite{zhang2012kernel} and serve as model selection criteria for the regression problems involving kernel feature maps in RKHSs. Additionally, to mitigate the issue of overfitting in finite sample estimates, two approaches can be exploited: one uses cross-validated likelihood~\cite{huang2018generalized}, and the other focuses on maximizing the marginal likelihood~\cite{huang2018generalized, wangoptimal}. Both methods are effective, with the marginal likelihood approach requiring an additional optimization process to obtain the optimal parameters. Despite their advantages, these generalized score functions pose serious computational challenges. Calculating them involves matrix multiplications, inverses, and determinants related to the kernel matrices, and necessitates considerable storage for the kernel matrix terms. As a result, they have a time complexity of $\mathcal{O}(n^3)$ and a memory complexity of $\mathcal{O}(n^2)$, where $n$ is the sample size. 

The computation bottleneck above constitutes a fundamental problem with kernel-related tasks~\cite{wang2020survey} such as hypothesis testing~\cite{strobl2019approximate, zhang2018large, renefficiently, ren2024learning} and Gaussian process~\cite{williams1995gaussian, gardner2018gpytorch, liu2020gaussian} in several large-scale learning scenarios. From the calculation perspective, low-rank techniques~\cite{si2017memory} (e.g. the Nystr{\"o}m method~\cite{williams2000using} and random feature~\cite{rahimi2007random}) can be used to generate matrix approximations. Unlike random feature methods that use data-independent sampling, the Nystr{\"o}m method employs data-dependent sampling, often yields better results~\citep{yang2012nystrom}. Inspired by this point, in this paper we try to explore the Nystr{\"o}m method to accelerate the computation of generalized score functions. However, incorporating low-rank approximation into our task requires further exploration due to the complex composite matrix multiplication and inverse operations involved in score computation. We address this by designing a set of rules to transform the score components into a specific form that can effectively simplify the computation. Additionally, for different data types, we develop more efficient sampling algorithms tailored to different types of data, further boosting the performance of the approximation algorithm. These measures result in linear computation complexity in both time and space. 

\noindent \textbf{Contributions.} In summary, our contributions are as follows:
\begin{itemize}[leftmargin=4mm]
\item We propose an approximate kernel-based generalized score function by low-rank approximation and a set of rules to handle the complex composite matrix operations involved in score calculation. This results in $\mathcal{O}(n)$ time and space complexities.
\item We design efficient sampling algorithms, tailored to different data types, to further boost the performance of the approximation algorithm. Such a solution enables the score function to handle diverse data types efficiently.
\item We conduct extensive experiments on both synthetic and real-world data, which show that our method significantly reduces computational costs compared to the SOTA method, while achieving comparable accuracy, especially for large datasets.
\end{itemize}

\noindent \textbf{Outline.} The rest of the paper is organized as follows: Sec.~\ref{sec:preliminaries} reviews the model selection problem to capture conditional independence in RKHSs. Sec.~\ref{sec:generalizedscore} presents the generalized score functions for model selection and describes the computational challenges. Sec.~\ref{sec:generalizedlowrank} introduces the low-rank method for obtaining approximate kernel. Sec.~\ref{sec: scorefunctionswithapproximatekernel} proposes the score function with approximate kernel. Sec.~\ref{sec:causalsearch} describes the causal search procedure with the proposed score. Sec.~\ref{sec:experimental} is performance evaluation. Sec.~\ref{sec:conclusion} concludes the paper. 

\section{Preliminaries}\label{sec:preliminaries}
\textbf{Notions.} We denote $X,Y,Z$ the sets of random variables with domains $\mathcal{X},\mathcal{Y},\mathcal{Z}$ respectively. Consider a continuous feature mapping  $\phi_{\mathcal{X}}: \mathcal{X}\mapsto \mathcal{H}_{\mathcal{X}}$ with the corresponding measureable positive definite kernel $k_{\mathcal{X}}:=\langle \phi_{\mathcal{X}},\phi_{\mathcal{X}}\rangle$, where $\mathcal{H}_{\mathcal{X}}$ is the corresponding RKHS. 
% The notions $\phi_{\mathcal{Y}}, \phi_{\mathcal{Z}}, k_{\mathcal{Y}}, k_{\mathcal{Z}}, \mathcal{H}_{\mathcal{Y}}, \mathcal{H}_{\mathcal{Z}}$ are defined by analogy. 
We denote the probability distribution of $X$ as $\mathbb{P}_{X}$, and the corresponding square integrable spaces as $\mathcal{L}^2_{X}$. We assume that $\mathcal{H}_{\mathcal{X}}\subset \mathcal{L}^2_{X}$. The notions for $Y,Z$ are defined by analogy. Also, we denote $\mathbb{P}_{XZ}$ as the joint distribution of $(X,Z)$ and $\mathbb{P}_{X|Z}$ as the conditional distribution given $Z$. When $\mathbb{P}_{XY|Z} = \mathbb{P}_{X|Z}\mathbb{P}_{Y|Z}$, i.e., $X$ and $Y$ are conditional independent (CI) given $Z$, we use the notion $X\perp \!\!\! \perp Y|Z$ for simplification. Next, we introduce the \textbf{CI characterization in the  RKHSs}~\cite{fukumizu2004dimensionality,fukumizu2007kernel}. We first have the following definition:
\begin{definition}[~\cite{fukumizu2004dimensionality,fukumizu2007kernel}] For the random vector $(X,Y)$ on $\mathcal{X}\times \mathcal{Y}$, we define the cross-covariance operator from $\mathcal{H}_\mathcal{Y}$ to $\mathcal{H}_\mathcal{X}$ as 
\begin{equation}
\langle f,\Sigma_{XY}g\rangle:= \mathbb{E}_{XY}[f(X)g(Y)] - \mathbb{E}_{X}[f(X)]\mathbb{E}_{Y}[g(Y)]
\end{equation}
for all $f\in \mathcal{H}_\mathcal{X}$ and $g\in \mathcal{H}_\mathcal{Y}$. Then the partial cross-covariance operator of $(X,Y)$ given $Z$ is given by $\Sigma_{XY| Z}:= \Sigma_{XY}-\Sigma_{XZ}\Sigma_{ZZ}^{-1}\Sigma_{ZY}.$
\end{definition}
\noindent If the kernels are characteristic~\cite{sriperumbudur2010hilbert,fukumizu2008characteristic} and bounded, the partial cross-covariance operator is related to CI by the following lemma:  
\begin{lemma}[~\cite{fukumizu2004dimensionality,fukumizu2008characteristic}] Let $\ddot{Z}:=(Y,Z)$ and $\otimes$ be the direct product symbol of spaces. It is assumed that $\mathbb{E}_{X|Z} [g(X)|Z=\cdot]\in 
\mathcal{H}_Z$ and
$\mathbb{E}_{X|\ddot{Y}}[g(X)|\ddot{Z}=\cdot]\in \mathcal{H}_Y\otimes \mathcal{H}_Z $ for all $g\in \mathcal{H}_{\mathcal{X}}$ and further $\mathcal{H}_{\mathcal{X}}$ is
probability-determining~\cite{fukumizu2004dimensionality}, then $\Sigma_{XX| Z} \geq \Sigma_{XX| \ddot{Z}}$ and
\begin{equation}
\label{charactervaroperator}
X\perp \!\!\! \perp Y|Z ~\Leftrightarrow~ \Sigma_{XX| Z} - \Sigma_{XX| \ddot{Z}} = 0.    
\end{equation}
\end{lemma}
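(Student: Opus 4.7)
The plan is to exploit the regression interpretation of the partial cross-covariance operator: under the stated regularity assumptions, $\Sigma_{ZZ}^{-1}\Sigma_{ZX}$ implements the conditional-expectation operator on $\mathcal{H}_{\mathcal{X}}$, so $\Sigma_{XX|Z}$ should be thought of as the covariance operator of the residual feature $\phi_{\mathcal{X}}(X)-\mathbb{E}[\phi_{\mathcal{X}}(X)|Z]$. Concretely, I would first verify the identity
\begin{equation}
\langle f,\Sigma_{XX|Z}f\rangle \;=\; \mathbb{E}\bigl[\mathrm{Var}(f(X)|Z)\bigr]
\end{equation}
for every $f\in\mathcal{H}_{\mathcal{X}}$ by expanding $\Sigma_{XZ}\Sigma_{ZZ}^{-1}\Sigma_{ZX}f$ as the RKHS element representing $\mathbb{E}[f(X)|Z=\cdot]$ and applying the law of total variance. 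The analogous identity with $Z$ replaced by $\ddot{Z}=(Y,Z)$ follows the same way using the second regularity assumption $\mathbb{E}[g(X)|\ddot{Z}=\cdot]\in\mathcal{H}_Y\otimes\mathcal{H}_Z$.

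Once these residual-variance identities are in hand, the operator inequality $\Sigma_{XX|Z}\geq \Sigma_{XX|\ddot{Z}}$ reduces to the pointwise tower inequality $\mathrm{Var}(f(X)|Z)\geq \mathbb{E}[\mathrm{Var}(f(X)|\ddot{Z})|Z]$ for each $f\in\mathcal{H}_{\mathcal{X}}$, i.e.\ conditioning on the richer $\sigma$-algebra of $\ddot{Z}$ can only shrink conditional variance. For the equivalence~(\ref{charactervaroperator}), the forward direction is direct: CI implies $\mathbb{E}[f(X)|\ddot{Z}]=\mathbb{E}[f(X)|Z]$ almost surely for every $f\in\mathcal{H}_{\mathcal{X}}$, so the two conditional-variance expressions coincide and the operator difference vanishes. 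For the converse, equality of the quadratic forms forces $\mathbb{E}\bigl[\mathrm{Var}(\mathbb{E}[f(X)|\ddot{Z}]|Z)\bigr]=0$, so $\mathbb{E}[f(X)|\ddot{Z}]$ is $Z$-measurable for every $f\in\mathcal{H}_{\mathcal{X}}$. The probability-determining hypothesis on $\mathcal{H}_{\mathcal{X}}$ then closes the argument: since $\{f(X):f\in\mathcal{H}_{\mathcal{X}}\}$ separates probability measures on $\mathcal{X}$, $Z$-measurability of all such conditional expectations forces $\mathbb{P}_{X|\ddot{Z}}=\mathbb{P}_{X|Z}$ a.s., which is exactly $X\perp \!\!\! \perp Y|Z$.

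The main obstacle will be the technical legitimacy of the operator $\Sigma_{ZZ}^{-1}$, which need not exist as a bounded inverse on all of $\mathcal{H}_Z$, together with the precise identification of $\Sigma_{ZZ}^{-1}\Sigma_{ZX}f$ with the conditional-expectation element in $\mathcal{H}_Z$. This is exactly where the assumptions $\mathbb{E}[g(X)|Z=\cdot]\in\mathcal{H}_Z$ and $\mathbb{E}[g(X)|\ddot{Z}=\cdot]\in\mathcal{H}_Y\otimes\mathcal{H}_Z$ enter: I would handle it by interpreting the inverse on the closed range of $\Sigma_{ZZ}$, equivalently by characterizing the representer $h\in\mathcal{H}_Z$ via the normal equation $\Sigma_{ZZ}h=\Sigma_{ZX}f$, so that all quadratic-form manipulations above are well-defined independent of whether $\Sigma_{ZZ}$ is strictly invertible. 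Once this foundation is laid, the remaining steps are essentially bookkeeping with inner products in the RKHS.
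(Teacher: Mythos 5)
The paper does not prove this lemma at all; it is imported verbatim from the cited works of Fukumizu et al., so there is no in-paper argument to compare against. Your sketch reconstructs exactly the standard proof from those references: the quadratic-form identity $\langle f,\Sigma_{XX|Z}f\rangle=\mathbb{E}[\mathrm{Var}(f(X)|Z)]$ (valid as an equality precisely because of the assumption $\mathbb{E}[f(X)|Z=\cdot]\in\mathcal{H}_Z$, which guarantees a solution of the normal equation $\Sigma_{ZZ}h=\Sigma_{ZX}f$), the law of total variance for the operator inequality, and the probability-determining property of $\mathcal{H}_{\mathcal{X}}$ to upgrade equality of conditional means of all $f\in\mathcal{H}_{\mathcal{X}}$ to equality of the conditional distributions $\mathbb{P}_{X|\ddot{Z}}=\mathbb{P}_{X|Z}$, i.e.\ conditional independence; this is correct in substance. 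If you write it out in full, two technical points deserve explicit care: the passage from ``for every $f$, almost surely $\mathbb{E}[f(X)|\ddot{Z}]=\mathbb{E}[f(X)|Z]$'' to ``almost surely, for every $f$'' requires a countable dense subset of $\mathcal{H}_{\mathcal{X}}$ (separability), and the cleanest rigorous handling of $\Sigma_{ZZ}^{-1}$ is via the factorization $\Sigma_{XZ}=\Sigma_{XX}^{1/2}V_{XZ}\Sigma_{ZZ}^{1/2}$ with a bounded correlation operator $V_{XZ}$, although your range/normal-equation interpretation also suffices under the stated assumptions.
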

\noindent Following that, we go to obtain model selection criteria for regression in RKHS that can capture CI. We consider the following two 
% regression functions
\begin{equation}
\phi_{\mathcal{X}}(X) = g_1(Z)+U_1,~ \phi_{\mathcal{X}}(X) = g_2(\ddot{Z})+U_2,    
\end{equation}
where $U_1,U_2$ are the noise. Then with Eq.~(\ref{charactervaroperator}), we can derive that 
\begin{equation*}
X\perp \!\!\! \perp Y|Z ~\Leftrightarrow~ \mathbb{E}_{Z}[\text{Var}_{X|Z}[\phi_{\mathcal{X}}(X)|Z]] = \mathbb{E}_{\ddot{Z}}[\text{Var}_{X|\ddot{Z}}[\phi_{\mathcal{X}}(X)|\ddot{Z}]].   
\end{equation*}
Therefore, the CI relationship in the general case can be seen as a model selection problem for appropriate regression tasks. In the next section, we introduce the score functions for the model selection and show the computational challenges it faces.

\section{Computational Challenges of Score Functions}
\label{sec:generalizedscore}
We consider $n$ independent and identically distributed (i.i.d) samples $(\mathbf{x},\mathbf{z}) = \{(x_i,z_i)\}_{i=1}^{n}$ of random variables $(X,Z)$, then the regression in
RKHS on this finite observation is reformulated as
\begin{equation}
\label{eq_gen}
\mathbf{k_x} = h(\mathbf{z}) + U,
\end{equation}
where $\mathbf{k_x}:=[\phi_{\mathcal{X}}(x_1),\cdots, \phi_{\mathcal{X}}(x_n)]$ is the empirical feature map. Additionally, we define the $n\times n$ kernel matrices $\mathbf{K_X},\mathbf{K_Z}$ with entries $(k_{\mathcal{X}})_{ij}:= k_{\mathcal{X}}(x_i,x_j), (k_{\mathcal{Z}})_{ij}:= k_{\mathcal{Z}}(x_i,x_j)$ and the centered kernel matrices $\mathbf{\tilde{K}_X}:=\mathbf{H}\mathbf{K_X}\mathbf{H}, \mathbf{\tilde{K}_Z}:=\mathbf{H}\mathbf{K_Z}\mathbf{H}$, where
$\mathbf{H}=\mathbf{I}-\frac{1}{n}\mathbf{1}\mathbf{1}^T$ is the centering matrix and $\mathbf{1}$ is a vector of ones. Then, with kernel ridge regression, we can obtain the prediction as
\begin{equation}
\widehat{h}(z) = \mathbf{\tilde{K}_Z} \bigl(\mathbf{\tilde{K}_Z} + n\lambda \mathbf{I}\bigl)^{-1} \mathbf{k_x},
\end{equation}
where $\lambda$ is the regularization parameter. 

% And the corresponding residual is given by $n \lambda \bigl(\mathbf{\tilde{K}_Z} + n\lambda \mathbf{I}\bigl)^{-1}\mathbf{k_x}$. 

\noindent \textbf{Generalized score functions} are developed for the model selection to capture general conditional independence in RKHSs. The score can be derived as the log-likelihood function for the regression problem in Eq.~(\ref{eq_gen}), which capture the regression error of $U$. However, such a method may be not robust enough and could lead to over-fitting, as it uses the same data for both training the model and evaluation. To obtain a more reliable measure of model performance, two approaches are recommended: one uses cross-validated likelihood~\cite{huang2018generalized}, and the other focuses on maximizing the marginal likelihood~\cite{huang2018generalized, wangoptimal}. Both approaches are effective, though the marginal likelihood method requires an additional optimization process to find the optimal parameters. In this paper, we focus on cross-validated likelihood and analyze the computational challenges by considering it as an example. It is important to note that all generalized score functions face similar computational challenges.

\noindent \textbf{Cross-validated likelihood} can be described as follows. First, the entire dataset of $n$ samples is divided into a training set $(x^{1}, z^{1})$ and a testing set $(x^{0}, z^{0})$, with sample sizes of $n_1$ and $n_0$, respectively. The regression model is then trained using the training set and subsequently evaluated using the testing set. Formally, the evaluation result using the trained regression model is given by
\begin{equation}
\widehat{h}(z^0) = \mathbf{\tilde{K}_Z}^{0,1} \bigl(\mathbf{\tilde{K}_Z}^{1} + n\lambda \mathbf{I}\bigl)^{-1} \mathbf{k^1_x},   
\end{equation}
where $\mathbf{\tilde{K}_Z}^{1}:=\mathbf{k^1_z}(\mathbf{k^1_z})^T$ is the centered matrix of $x^{1}$, $(\mathbf{k^1_x},\mathbf{k^1_z})$ is the empirical feature map of $(x^{1},z^{1})$ and, $\mathbf{\tilde{K}_Z}^{0,1}:= \mathbf{k^0_z}(\mathbf{k^1_z})^T$. Note that subsequent similar symbols of the kernel matrix are defined by analogy. Also, the estimated covariance matrix of the residual is 
\begin{equation}
\begin{split}
 \hat{\mathbf{\Sigma}} &= \frac{1}{n_1} [\mathbf{k^1_x} - \widehat{h}(z^1)][\mathbf{k^1_x} - \widehat{h}(z^1)]^T \\&=  n_1\lambda^2 \bigl(\mathbf{\tilde{K}_Z}^{1} + n\lambda \mathbf{I}\bigl)^{-1}\mathbf{\tilde{K}_X}^{1} \bigl(\mathbf{\tilde{K}_Z}^{1} + n\lambda \mathbf{I}\bigl)^{-1}.
\end{split}
\end{equation}
For practical reasons, we add a small additional term to this estimate to ensure that it is positive-definite, i.e., using $\hat{\mathbf{\Sigma}}+\gamma \mathbf{I}$. Then,  the likelihood function can be obtained by 
\begin{equation}
\begin{split}
\label{losssssss}
l_{\text{CV}}(\widehat{h},\hat{\mathbf{\Sigma}}|x^0,z^0) = & -\frac{n_0^2}{2} \log(2\pi) - \frac{n_0}{2}\log \left|\hat{\mathbf{\Sigma}}+\gamma \mathbf{I} \right| \\
-&~\frac{1}{2\gamma}\text{Tr}\left\{[\mathbf{k^0_x} - \widehat{h}(z^1)](\hat{\mathbf{\Sigma}}+\gamma \mathbf{I})^{-1}[\mathbf{k^0_x} - \widehat{h}(z^1)]^T\right\}
\\  =  -\frac{n_0^2}{2} \log(2\pi) &- \frac{n_0}{2}\log \left|n_1\beta\mathbf{B}+\mathbf{I} \right| -\frac{n_0n_1}{2}\log \gamma  \\
- \frac{1}{2\gamma}\text{Tr}\Bigl\{\mathbf{\tilde{K}_X}^{0} &~+ \mathbf{\tilde{K}_Z}^{0,1}\mathbf{B}\mathbf{\tilde{K}_Z}^{1,0} -2\mathbf{\tilde{K}_X}^{0,1}\mathbf{A}\mathbf{\tilde{K}_Z}^{1,0}-n_1\beta\mathbf{\tilde{K}_X}^{0,1}\mathbf{C}\mathbf{\tilde{K}_X}^{1,0}\\-n_1\beta&\mathbf{\tilde{K}_Z}^{0,1}\mathbf{A}\mathbf{\tilde{K}_X}^{1}\mathbf{C}\mathbf{\tilde{K}_X}^{1}\mathbf{A}\mathbf{\tilde{K}_Z}^{1,0} + 2n_1\beta\mathbf{\tilde{K}_X}^{0,1}\mathbf{C}\mathbf{\tilde{K}_X}^{1}\mathbf{A}\mathbf{\tilde{K}_Z}^{1,0}\Bigl\},
\end{split}
\end{equation}
where $\mathbf{A} = (\mathbf{\tilde{K}_Z}^{1}+n_1\lambda\mathbf{I})^{-1}$, $\mathbf{B} =\mathbf{A}\mathbf{\tilde{K}_X}^{1}\mathbf{A}$,  $\mathbf{C} = \mathbf{A}(\mathbf{I}+n_1\beta \mathbf{B})^{-1}\mathbf{A}$ and $\beta:= \lambda^2/\gamma$. Also, when the conditional set $z$ is empty, we have 
\begin{equation}
\begin{split}
\label{loss2}
l_{\text{CV}}(\hat{\mathbf{\Sigma}}|x^0) =  -\frac{n_0^2}{2} \log(2\pi) &- \frac{n_0}{2}\log \left|\hat{\mathbf{\Sigma}}+\gamma \mathbf{I} \right| \\
-&~\frac{1}{2\gamma}\text{Tr}\left\{(\mathbf{k^0_x})(\hat{\mathbf{\Sigma}}+\gamma \mathbf{I})^{-1}(\mathbf{k^0_x})^T\right\}
\\  =  -\frac{n_0^2}{2} \log(2\pi) - \frac{n_0}{2}\log &\left|\frac{1}{n_1\gamma}\breve{\mathbf{B}}+\mathbf{I} \right| -\frac{n_0n_1}{2}\log \gamma  \\
- &\frac{1}{2\gamma}\text{Tr}\Bigl\{\mathbf{\tilde{K}_X}^{0} ~-\frac{1}{n_1\gamma}\mathbf{\tilde{K}_X}^{0,1}\breve{\mathbf{B}}\mathbf{\tilde{K}_X}^{1,0}\Bigl\},
\end{split}
\end{equation}
where $\breve{\mathbf{B}}:=\bigl(\mathbf{I}+\frac{1}{n_1\lambda}\mathbf{\tilde{K}_X}^{1} \bigl)^{-1}$. Typically, the above process is repeated $Q$ times and the average result is output as the final score, i.e., $S_{\text{CV}}(X,Z) = \frac{1}{Q}\sum_{q=1}^Q l_{\text{CV}}^{(q)}(\widehat{h},\hat{\mathbf{\Sigma}}|x^0,z^0)$. Now, we go to discuss the computational challenges posed by the score functions.

\noindent\textbf{Computational Challenges.} The space complexity for storing the kernel matrices is $\mathcal{O}(n^2)$. Additionally, the time complexity for calculating score function is analyzed as follows:
\begin{itemize}[leftmargin=5mm]
\item Calculating $\mathbf{\tilde{K}_X}$ and $\mathbf{\tilde{K}_Z}$ costs $\mathcal{O}(n^2)$. Note that $n=n_0+n_1$.  
\item Calculating $\mathbf{A}, \mathbf{B},\mathbf{C}\in \mathbb{R}^{n_1\times n_1}$ costs $\mathcal{O}(n_1^3)$ by the matrix inverse and matrix multiplication operations.
\item For the remaining items in $\text{Tr}\{\cdot \}$ the costs are as follows:
\begin{table}[h]
% \footnotesize
\small
\centering
\begin{tabular}{l|l}
\hline
 Terms & Complexity \\
\hline
% $\mathbf{\tilde{K}_X}^{0(q)}$ & $\mathcal{O}(n_1^2)$  \\
$\mathbf{\tilde{K}_Z}^{0,1}\mathbf{B}\mathbf{\tilde{K}_Z}^{1,0}$, $\mathbf{\tilde{K}_X}^{0,1}\mathbf{A}\mathbf{\tilde{K}_Z}^{1,0}$, $\mathbf{\tilde{K}_X}^{0,1}\mathbf{C}\mathbf{\tilde{K}_X}^{1,0}$ & $\mathcal{O}(n_0n_1^2+n_0^2n_1)$   \\
$\mathbf{\tilde{K}_Z}^{0,1}\mathbf{A}\mathbf{\tilde{K}_X}^{1}\mathbf{C}\mathbf{\tilde{K}_X}^{1}\mathbf{A}\mathbf{\tilde{K}_Z}^{1,0}$ & $\mathcal{O}(n_0n_1^2+n_0^2n_1+n_1^3)$\\
$\mathbf{\tilde{K}_X}^{0,1}\mathbf{C}\mathbf{\tilde{K}_X}^{1}\mathbf{A}\mathbf{\tilde{K}_Z}^{1,0}$ & $\mathcal{O}(n_0n_1^2+n_0^2n_1+n_1^3)$\\
\hline
\end{tabular}
% \caption{Maximum path lengths, per-layer complexity and minimum number of sequential operations for different layer types. $n$ is the sequence length, $d$ is the representation dimension, $k$ is the kernel size of convolutions and $r$ the size of the neighborhood in restricted self-attention.
% }
\label{tab:mascots}
\end{table}
\item In particular, for computing $\log \left|n_1\beta\mathbf{B}+\mathbf{I} \right|$, the Cholesky decomposition is utilized. This involves finding a lower triangular matrix $\mathbf{L}$ such that $n_1\beta\mathbf{B}+\mathbf{I} = \mathbf{L}\mathbf{L}^T$, which requires $\mathcal{O}(n_1^3)$ time. The logarithm of the determinant $\log \left|n_1\beta\mathbf{B}+\mathbf{I} \right|$ can then be computed as $\sum_{i=1}^{n_1}2\cdot \log L_{ii}$, which involves $\mathcal{O}(n_1)$ time cost. 
\item As a result, the overall time complexity is $\mathcal{O}(n^3)$. 
\end{itemize}
In summary, the primary computational challenges include the storage of kernel matrices, the operations required for matrix multiplication and inversion, and the calculation of determinants.
% To effectively overcome these issues, we propose a linear-time score in the next section by using a low-rank approach.
% \begin{equation}
% S_{\text{CV}}(X,Z) =\frac{1}{Q}\sum_{q=1}^Q l(\hat{\tilde{F}}^{(q)}|D^{(q)}_{0,i}),    
% \end{equation}
% where the likelihood evaluated on the $q$-th test set

\section{Low-rank Approximated Kernel}
\label{sec:generalizedlowrank}
To address the computational bottleneck, we use low-rank approximation of the kernel function. Specifically, the Nyström method~\cite{williams2000using} is employed due to its efficiency in generating data-dependent approximations. Various Nyström method variants exist, broadly categorized into fixed sampling methods~\cite{kumar2012sampling} and adaptive sampling methods, and the latter is more flexible in selecting informative columns~\cite{drineas2005approximating,bach2002kernel}. Here, we choose the incomplete Cholesky decomposition (ICL) method~\cite{bach2002kernel}, detailed in Algorithm~\ref{alg1}.  
\begin{algorithm}
\caption{Kernel incomplete Cholesky decomposition (ICL)~\cite{bach2002kernel}}
\label{alg1}
\begin{flushleft}
\textbf{Input:} the kernel function $k$, the sample matrix $\mathbf{X}=\{\mathbf{x}_1,\mathbf{x}_2,..,\mathbf{x}_n\}$, the precision parameter $\eta$, the maximal rank parameter $m_0$\\
\textbf{Output:} an $n\times m$ matrix $\mathbf{\Lambda}$ such that $\Vert \mathbf{\Lambda}\mathbf{\Lambda}^T - \mathbf{K_{X}} \Vert\leq \eta$ if $m< m_0$.
\end{flushleft}
\begin{algorithmic}[1]
% Initialization: i = 1, K0 = K, P = I, for j ∈ [1, N], Gjj = Kjj
\State Initialization: permutation vector $\mathbf{\pi}=(1,...,n)$, diagonal of the residual kernel matrix $\mathbf{d}=\mathbf{0}_{n}$, $\mathbf{\Lambda}=\mathbf{O}_{n\times m_0}$
\For {$i=1,2,...,m_0$}
\State $\lhd$ \textbf{Update the diagonal of the residual kernel matrix.}
\State For $j\in \mathbf{\pi}[i: n]$, $\mathbf{d}_{j} = k(\mathbf{x}_j,\mathbf{x}_j)$.
\State If $i>1$:~for $j\in \mathbf{\pi}[i: n]$, $\mathbf{d}_{j} = \mathbf{d}_{j} - \sum_{r=1}^{i-1} \mathbf{\Lambda}_{jr}^2$.
\State If $\sum_{j=i}^n \mathbf{d}_j < \eta$: $m=i$, break. $\lhd$ \textbf{Already reach precision.}
\State $j^* = \arg \max_{j\in [i,n]} \mathbf{d}_{j}$. $\lhd$ \textbf{Find best new element.}
\State $\lhd$ \textbf{Permute elements $i$ and $j^*$.}
\State $\mathbf{\pi}[i] \leftrightarrow \mathbf{\pi}[j^*]$, and for $r\in [1:i-1]$,  $\mathbf{\Lambda}_{ir} \leftrightarrow \mathbf{\Lambda}_{j^*r}$. 
\State $\lhd$ \textbf{Calculate the $i$-th column.}
\State $\mathbf{\Lambda}_{ii} = \sqrt{\mathbf{d}_{j^*}}$, and $k^{\mathbf{\pi}}_{ji} = k(\mathbf{x}_{\mathbf{\pi}[j]}, \mathbf{x}_{\mathbf{\pi}[i]})$.
\State For $j\in [i+1:n]$, $\mathbf{\Lambda}_{ji} =(k^{\pi}_{ji} - \sum_{r=1}^{i-1}\mathbf{\Lambda}_{jr}\mathbf{\Lambda}_{ri})/\mathbf{\Lambda}_{ii}$.
\EndFor
\State If $m<m_0$:~$\mathbf{\Lambda} = \mathbf{\Lambda}[:,1:m]$; Else: $m=m_0$. $\lhd$ \textbf{Cut columns.}
\State $\mathbf{\Lambda} = \mathbf{\Lambda}[\mathbf{\pi}^{-1}[1:n],:]$ $\lhd$ \textbf{Reverse the permutation at end.}
\end{algorithmic}
\end{algorithm}
% As a result, we can 
The algorithm works by sequentially selecting columns of $\mathbf{K_X}$, each is chosen to greedily maximize a lower bound on the reduction in approximation error. The running time of Algorithm~\ref{alg1} is 
 $\mathcal{O}(nm^2)$ , and the storage cost is $\mathcal{O}(nm)$ if we further using dynamic allocation for 
 $\mathbf{\Lambda}$. Applying for $\mathbf{K_{X}}$, we obtain $\mathbf{\Lambda_X}$ of size $n\times m_x$ such that $\mathbf{\Lambda}_X\mathbf{\Lambda}_X^T\thickapprox \mathbf{K_{X}}$. By centering $\mathbf{\Lambda}_X$ as $\tilde{\mathbf{\Lambda}}_{X} = \mathbf{H}\mathbf{\Lambda}_X = \mathbf{\Lambda}_X-\frac{1}{n}\mathbf{1}(\mathbf{1}^T\mathbf{\Lambda}_X)$, we achieve $\tilde{\mathbf{\Lambda}}_{X}\tilde{\mathbf{\Lambda}}_{X}^T \thickapprox \tilde{\mathbf{K}}_{X}$. Similarly, for $Z$, we obtain $\tilde{\mathbf{\Lambda}}_{Z}$ of size $n\times m_z$ such that $\tilde{\mathbf{\Lambda}}_{Z}\tilde{\mathbf{\Lambda}}_{Z}^T\thickapprox \tilde{\mathbf{K}}_{Z}$. This approach provides approximate kernels for score calculation and can be applied to any data type. Even though ICL has the advantage of generalizability as well as adaptivity, the speed of the algorithm is limited by the for loop (line 2, Alg.~\ref{alg1}), which does not allow it to receive the gain of matrix operations. Therefore, we design special algorithm for discrete variables to enable the application of the generalized score function to various data types more effectively. We first give the following results.
\begin{lemma}[Rank Bound for Discrete Case]
\label{lemma_rankbound_discrete}
Let $X$ be a discrete variable that takes on $m_d$ distinct values, and let the kernel function be $k$. The centered kernel matrix $\tilde{\mathbf{K}}_{X}$ satisfies $\text{rank}(\tilde{\mathbf{K}}_{X})\leq m_d$. 
\end{lemma}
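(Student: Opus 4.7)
The plan is to exploit the fact that, when $X$ only takes $m_d$ distinct values, the kernel matrix $\mathbf{K}_X$ is forced to have a low-rank block structure inherited from an $m_d \times m_d$ "value-level" kernel, and then to pass this bound through the centering operation via the rank-inequality for matrix products.

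First I would fix an enumeration $\{v_1, \ldots, v_{m_d}\}$ of the distinct values taken by the samples $x_1, \ldots, x_n$, and introduce the $n \times m_d$ indicator (assignment) matrix $\mathbf{E}$ with $\mathbf{E}_{ia} = 1$ if $x_i = v_a$ and $0$ otherwise. Let $\mathbf{K}_d \in \mathbb{R}^{m_d \times m_d}$ be the value-level kernel defined by $(\mathbf{K}_d)_{ab} = k(v_a, v_b)$. Since the kernel value $k(x_i, x_j)$ depends only on which distinct values $x_i$ and $x_j$ equal, we immediately get the factorization
\begin{equation*}
\mathbf{K}_X \;=\; \mathbf{E}\,\mathbf{K}_d\,\mathbf{E}^T.
\end{equation*}
Because $\mathbf{E}$ has only $m_d$ columns and $\mathbf{K}_d$ has size $m_d \times m_d$, the standard inequality $\operatorname{rank}(AB) \leq \min\{\operatorname{rank}(A), \operatorname{rank}(B)\}$ gives $\operatorname{rank}(\mathbf{K}_X) \leq m_d$.

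Next I would apply the centering operator. By definition $\tilde{\mathbf{K}}_X = \mathbf{H} \mathbf{K}_X \mathbf{H}$, and using the same rank inequality twice yields
\begin{equation*}
\operatorname{rank}(\tilde{\mathbf{K}}_X) \;\leq\; \operatorname{rank}(\mathbf{K}_X) \;\leq\; m_d,
\end{equation*}
which is exactly the claimed bound. One can equivalently phrase the argument in feature-space terms: the vectors $\phi_{\mathcal{X}}(x_1), \ldots, \phi_{\mathcal{X}}(x_n)$ take at most $m_d$ distinct values, hence span a subspace of $\mathcal{H}_{\mathcal{X}}$ of dimension at most $m_d$, and centering does not increase this dimension.

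There is no substantive obstacle in this proof; the only subtlety worth flagging is that the conclusion holds regardless of the choice of kernel $k$ (characteristic or not) and regardless of whether $\mathbf{K}_d$ itself has full rank $m_d$. In fact the bound is tight precisely when $\mathbf{K}_d$ is nonsingular and all $m_d$ values actually appear among the samples, which is typically the case for any reasonable kernel on a finite alphabet. This observation is exactly what lets the subsequent sampling algorithm for discrete variables stop at $m_d$ columns rather than running the generic ICL loop to the precision threshold $\eta$.
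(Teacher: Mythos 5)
Your proposal is correct and follows essentially the same route as the paper: the key observation in both is that the kernel entries depend only on the $m_d$ distinct values (the paper phrases this as the kernel matrix having at most $m_d$ linearly independent rows, you make it explicit via the factorization $\mathbf{K}_X = \mathbf{E}\mathbf{K}_d\mathbf{E}^T$), followed by the identical step $\operatorname{rank}(\tilde{\mathbf{K}}_X) = \operatorname{rank}(\mathbf{H}\mathbf{K}_X\mathbf{H}) \leq \operatorname{rank}(\mathbf{K}_X) \leq m_d$. Your explicit indicator-matrix factorization is a slightly more formal rendering of the same idea, not a different argument.
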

\begin{proof}
Given that $X$ can take $m_d$ distinct values, the kernel matrix has at most $m_d$ linearly independence rows, thus $\text{rank}(\mathbf{K}_{X})\leq m_d$. Since $\tilde{\mathbf{K}}_{X} = \mathbf{H}{\mathbf{K}}_{X}\mathbf{H}$, we have $\text{rank}(\tilde{\mathbf{K}}_{X}) \leq \text{rank}(\mathbf{K}_{X})\leq m_d$.    
\end{proof}
Lemma~\ref{lemma_rankbound_discrete} indicates that for discrete variables, a low-rank decomposition of the kernel matrix can be obtained using samplings no more than the number of distinct values. To provide a more intuitive explanation, we offer an example as follows: 
\begin{example}
Let $X$ be be a binary discrete variable with samples $(x_1,x_2,x_3)=(1,0,1)$ and kernel function $k(x,y)=xy$. The decomposition of the kernel matrix is as follows:
\begin{equation}\label{eq:appendrow}
\newcommand\x{1}
\newcommand\y{0}
\mathbf{K}_{X} = \left[\begin{array}{ccc}
    \x  & \y  & \x  \\
    \y   & \y  & \y  \\
    \x   & \y   & \x \\
  \end{array}\right] = \left[\begin{array}{cc}
    \x  \\
    \y  \\
    \x   \\
  \end{array}\right]\left[\begin{array}{ccc}
    \x  & \y  & \x  \\

  \end{array}\right] = \mathbf{\Lambda}_X\mathbf{\Lambda}_X^T, 
\end{equation}
This provides a decomposition with rank no greater than $2$.
\end{example}
Now, we present an algorithm to obtain the desired low-rank decomposition for general kernel functions, as outlined in Algorithm~\ref{alg2}. First, we identify the important pivots by removing duplicate rows from $\mathbf{X}$, resulting in a submatrix $\mathbf{X}'$ with $m_d$ rows. Next, we compute $\mathbf{K^{-1/2}_{X'}}$, which can be achieved using Cholesky decomposition. The low-rank decomposition is then obtained such that $\mathbf{\Lambda} = \mathbf{K_{XX'}}\mathbf{L}^{-1}$. This process can be executed in $\mathcal{O}(nm^2+m^3)$ time with a storage cost of $\mathcal{O}(nm)$. Essentially, this method is similar to the Nyström method, with the distinction that the columns are directly obtained.
\begin{algorithm}
\caption{Low-rank decomposition for discrete variables}
\label{alg2}
\begin{flushleft}
\textbf{Input:} Kernel function $k$, sample matrix $\mathbf{X}=\{\mathbf{x}_1,\mathbf{x}_2,..,\mathbf{x}_n\}$ for discrete variable $X$ with $m_d$ distinct values.\\
\textbf{Output:} an $n\times m$ matrix $\mathbf{\Lambda}$ such that $\mathbf{\Lambda}\mathbf{\Lambda}^T= \mathbf{K_{X}}$.
\end{flushleft}
\begin{algorithmic}[1]
\State Remove duplicate rows from $\mathbf{X}$ to obtain $\mathbf{X}'$.
\State Let $m_d$ be the number of rows in $\mathbf{X}'$.
% \If{$m_d<m_0$} 
\State Compute $\mathbf{K_{XX'}} = k(\mathbf{X},\mathbf{X}')$ and $\mathbf{K_{X'}} = k(\mathbf{X}',\mathbf{X}')$.
\State Perform Cholesky decomposition of $\mathbf{K_{X'}} = \mathbf{L}\mathbf{L}^T$ .
\State Set $\mathbf{\Lambda} = \mathbf{K_{XX'}}\mathbf{L}^{-1}$ and $m=m_d$.
% \Else ~ Perform Algorithm~\ref{alg1}.
% \EndIf
\end{algorithmic}
\end{algorithm}
Furthermore, we can demonstrate that the decomposition is accurate, as shown in the following lemma.
\begin{lemma}[Accurate Low-rank Decomposition]
For discrete variables, the decomposition is accurate, i.e., $\mathbf{K_{XX'}}\mathbf{K_{X'}^{-1}}\mathbf{K_{X'X}} = \mathbf{K_{X}}$.
\end{lemma}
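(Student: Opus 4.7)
The plan is to exploit the fact that every row of $\mathbf{X}$ is a duplicate of some row of $\mathbf{X}'$, and therefore every kernel evaluation involving a point of $\mathbf{X}$ can be rewritten as a kernel evaluation involving the corresponding point of $\mathbf{X}'$. I would formalize this with an $n \times m_d$ selection matrix $\mathbf{S}$ defined by $\mathbf{S}_{ij} = 1$ if $\mathbf{x}_i = \mathbf{x}'_j$ and $0$ otherwise, so that each row of $\mathbf{S}$ contains exactly one nonzero entry. Writing $\sigma(i)$ for the unique index with $\mathbf{x}_i = \mathbf{x}'_{\sigma(i)}$, this is just the matrix representation of the map $i \mapsto \sigma(i)$.

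The three key identities follow directly from unfolding the kernel entries. First,
\begin{equation*}
(\mathbf{K_{XX'}})_{ij} = k(\mathbf{x}_i, \mathbf{x}'_j) = k(\mathbf{x}'_{\sigma(i)}, \mathbf{x}'_j) = (\mathbf{K_{X'}})_{\sigma(i),j} = (\mathbf{S}\mathbf{K_{X'}})_{ij},
\end{equation*}
so $\mathbf{K_{XX'}} = \mathbf{S}\mathbf{K_{X'}}$. By symmetry, $\mathbf{K_{X'X}} = \mathbf{K_{X'}}\mathbf{S}^T$. The same substitution applied in both arguments gives $(\mathbf{K_X})_{ij} = k(\mathbf{x}'_{\sigma(i)}, \mathbf{x}'_{\sigma(j)}) = (\mathbf{S}\mathbf{K_{X'}}\mathbf{S}^T)_{ij}$, i.e.\ $\mathbf{K_X} = \mathbf{S}\mathbf{K_{X'}}\mathbf{S}^T$. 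Substituting these into the left-hand side of the claim yields
\begin{equation*}
\mathbf{K_{XX'}}\mathbf{K_{X'}^{-1}}\mathbf{K_{X'X}} = \mathbf{S}\mathbf{K_{X'}}\mathbf{K_{X'}^{-1}}\mathbf{K_{X'}}\mathbf{S}^T = \mathbf{S}\mathbf{K_{X'}}\mathbf{S}^T = \mathbf{K_X},
\end{equation*}
which is exactly what needs to be shown.

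The only subtlety is the invertibility of $\mathbf{K_{X'}}$, and this is the step I would be most careful about. Since the rows of $\mathbf{X}'$ are pairwise distinct by construction, and since the algorithm relies on a positive definite (in particular, characteristic or strictly positive definite) kernel for which the Cholesky decomposition $\mathbf{K_{X'}} = \mathbf{L}\mathbf{L}^T$ in Algorithm~\ref{alg2} is well defined, $\mathbf{K_{X'}}$ is nonsingular and the cancellation $\mathbf{K_{X'}}\mathbf{K_{X'}^{-1}}\mathbf{K_{X'}} = \mathbf{K_{X'}}$ is legitimate. I would mention this assumption explicitly so that the equality is not taken for granted. Apart from this, the argument is a direct computation driven by the selection-matrix bookkeeping, with no further obstacles.
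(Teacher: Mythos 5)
Your proof is correct and is essentially the same as the paper's: your selection matrix $\mathbf{S}$ is just the transpose of the paper's indicator matrix $\mathbf{P}$ (with $P_{ij}=\delta_{v_i=x_j}$), and both arguments reduce the claim to the cancellation $\mathbf{K_{X'}}\mathbf{K_{X'}^{-1}}\mathbf{K_{X'}}=\mathbf{K_{X'}}$ after rewriting the cross-kernel matrices through that indicator structure. Your explicit remark on the invertibility of $\mathbf{K_{X'}}$ (which the paper leaves implicit via the Cholesky step) is a reasonable added precision but does not change the route.
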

\begin{proof}
Let the samples have values $\{v_1, v_2, ..., v_{m_d}\}$. Define $\mathbf{P}_{m_d\times n}$ with the elements $P_{ij} = \delta_{v_i=x_j}$, where $\delta$ is the {indicator} function. We verify that $\mathbf{K_{X'X}} = \mathbf{K_{X'}}\mathbf{P}$ since  
\begin{equation}
(\mathbf{K_{X'}}\mathbf{P})_{ij} = \sum_{q} k(v_i,v_q)\delta_{v_q=x_j} = k(v_i,x_j) = (\mathbf{K_{X'X}})_{ij}.   
\end{equation}
Thus, $\mathbf{K_{XX'}}\mathbf{K_{X'}^{-1}}\mathbf{K_{X'X}} = \mathbf{K_{XX'}}\mathbf{K_{X'}^{-1}}\mathbf{K_{X'}}\mathbf{P}= \mathbf{K_{XX'}}\mathbf{P}$. This completes the proof as
$(\mathbf{K_{XX'}}\mathbf{P})_{ij} = \sum_q k(x_i,v_q)\delta_{v_q=x_j} = k(x_i,x_j)$.
\end{proof}

Though we can achieve an accurate low-rank decomposition for discrete variables, as the dimensionality of the variable increases or the size of the conditional set grows, the number of discrete values can increase exponentially. Consequently, the low-rank decomposition for discrete cases is mainly suitable for speeding up computation on data with simpler structures. This method is particularly effective for sparse causal graphs, as demonstrated in the experimental section.

To address cases where the number of distinct values is very large, we employ Alg.~\ref{alg1} to obtain approximate results. Such a strategy is to balance accuracy and efficiency by trading off some precision for computational speed, thus providing a practical solution for handling more complex data structures efficiently.

\section{Score Function with Approximate Kernel}
\label{sec: scorefunctionswithapproximatekernel}
In this section, we show how to use the low-rank approximate kernel obtained above to speed up the calculation of generalized score function in Eq.~(\ref{losssssss}). We first present a theoretical roadmap, and then detail the derivation process. 
% After insert in approximate kernel, we can observe that the terms are all related to the following form. 
 
\noindent\textbf{Additional Notions.} We start by introducing some additional notations. Take the simplest item $\text{Tr}(\tilde{\mathbf{K}}_X^0)$ as an example. Using the low-rank algorithm introduced above, let $m_x$ be the number of pivots of kernel matrix of $X$ determined by the algorithm, we obtain a low-rank approximation of the centered kernel function $\tilde{\mathbf{K}}_X^0 \doteq \tilde{\mathbf{\Lambda}}_{X0}\tilde{\mathbf{\Lambda}}_{X0}^T$~\footnote{$\doteq$ indicates the equivalence after the substitution operation of all the centered kernel function into its corresponding approximate kernel function.},  where $\tilde{\mathbf{\Lambda}}_{X0}$ is an $n_0\times m_x$ matrix with rank no more than $m_x$, and $m_x\ll n_0$. Note that we use the subscript in $\tilde{\mathbf{\Lambda}}_{X0}$ to show the correspondence with the superscript in $\tilde{\mathbf{K}}_X^0$, thus similarly we have the approximation $\tilde{\mathbf{K}}_X^{0,1} \doteq \tilde{\mathbf{\Lambda}}_{X0}\tilde{\mathbf{\Lambda}}_{X1}^T$. Also, the notions for the approximation kernel of the training samples (related to $n_1$) and of the conditional variable $Z$ can be defined analogously. Additionally, we denote the number of pivots of kernel matrix of $Z$ determined by the low-rank algorithm as $m_z$.

\noindent\textbf{Theoretical roadmap.} %We move on to present a theoretically feasible program. 
For the simple case, the obtained approximate kernel function can be easily utilized. For example, for the calculating of $\text{Tr}(\tilde{\mathbf{\Lambda}}_{X0}\tilde{\mathbf{\Lambda}}_{X0}^T)$, since $\text{Tr}(\tilde{\mathbf{\Lambda}}_{X0}\tilde{\mathbf{\Lambda}}_{X0}^T) = \mathbf{1}^T(\tilde{\mathbf{\Lambda}}_{X0}\odot\tilde{\mathbf{\Lambda}}_{X0})\mathbf{1}$, where $\odot$ is the entrywise matrix product, we can calculate it with the time cost $\mathcal{O}(n_0m_x)$. As a contrast, if no approximation is performed, computing $\text{Tr}(\tilde{\mathbf{K}}_X^0)$ requires $\mathcal{O}(n_0^2)$ time, which is dominated by computing the centered kernel matrix $\tilde{\mathbf{K}}_X^0$.

For the remaining terms in trace operation in Eq.~(\ref{losssssss}), the problem is further complicated by a bulk of product operations and the intermediate terms $\mathbf{A} \doteq (\tilde{\mathbf{\Lambda}}_{Z1}\tilde{\mathbf{\Lambda}}_{Z1}^T+n_1\lambda\mathbf{I})^{-1}$, $\mathbf{B} \doteq \mathbf{A}\tilde{\mathbf{\Lambda}}_{X1}\tilde{\mathbf{\Lambda}}_{X1}^T\mathbf{A}$,  $\mathbf{C} \doteq \mathbf{A}(\mathbf{I}+n_1\beta \mathbf{B})^{-1}\mathbf{A}$ involving matrix inverses. To handle these complicated composite operations, we show that there is a way to convert them all into a specific form that can used for calculation efficiently and convenient to analyze. 

\begin{definition}[Dumbbell Form] 
We define the product of a sequence of matrices to have the dumbbell form when it can be expressed as $[\mathbf{\breve{W}_l}]_{n_l\times m_1}[\mathbf{\breve{T}_1}]_{m_1\times m_2}\cdots [\mathbf{\breve{T}_k}]_{m_k\times m_{k+1}}[\mathbf{\breve{W}_r}]_{m_{k+1}\times n_r}$, 
where the size $n_l, n_r\in \{n_0,n_1\}$ and $m_k\in \{m_x, m_z\}$ for all $k$, $k\geq 1$. For the case $k=0$, let the dumbbell form be $[\mathbf{\breve{W}_l}]_{n_l\times m_1}[\mathbf{\breve{W}_r}]_{m_{1}\times n_r}$.
\end{definition}
For simplicity, we call the product of a sequence of matrices of the dumbbell form as a dumbbell-form matrix chain. We have the following results for the dumbbell form. 
\begin{lemma}[Multiplicative Closure] 
\label{lem:multiply_operation}
The dumbbell form is closed for multiplication, i.e., the (well-defined) product of two dumbbell-form matrix chains remains a dumbbell-form matrix chain.
\end{lemma}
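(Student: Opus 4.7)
The plan is to prove multiplicative closure by direct construction: write out both dumbbell chains explicitly, multiply them end to end, and verify that the single ``junction'' matrix created by the multiplication fits the definition of an interior $\mathbf{T}$-block. This turns the lemma into an exercise in dimensional bookkeeping.

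First I would fix notation. Suppose the two chains are
\begin{equation*}
\mathbf{M} = [\mathbf{\breve{W}_l}]_{n_l\times m_1}[\mathbf{\breve{T}_1}]_{m_1\times m_2}\cdots [\mathbf{\breve{T}_k}]_{m_k\times m_{k+1}}[\mathbf{\breve{W}_r}]_{m_{k+1}\times n_r}
\end{equation*}
and
\begin{equation*}
\mathbf{M}' = [\mathbf{\breve{W}'_l}]_{n'_l\times m'_1}[\mathbf{\breve{T}'_1}]_{m'_1\times m'_2}\cdots [\mathbf{\breve{T}'_{k'}}]_{m'_{k'}\times m'_{k'+1}}[\mathbf{\breve{W}'_r}]_{m'_{k'+1}\times n'_r},
\end{equation*}
with all $n$-subscripts in $\{n_0,n_1\}$ and all $m$-subscripts in $\{m_x,m_z\}$. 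Well-definedness of $\mathbf{M}\mathbf{M}'$ forces $n_r = n'_l$, so the matching inner dimension in the middle is one of the ``large'' sizes $n_0$ or $n_1$.

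Next I would form the product $\mathbf{M}\mathbf{M}'$ by simple concatenation. The only place where the chain structure could break is at the seam, where two end-blocks meet, namely $[\mathbf{\breve{W}_r}]_{m_{k+1}\times n_r}[\mathbf{\breve{W}'_l}]_{n'_l\times m'_1}$. Their product has size $m_{k+1}\times m'_1$, and both of these sizes lie in $\{m_x,m_z\}$. This is exactly the shape requirement for an interior block. I would therefore define a new block
\begin{equation*}
[\mathbf{\breve{T}_{k+1}}]_{m_{k+1}\times m'_1} \;:=\; [\mathbf{\breve{W}_r}]_{m_{k+1}\times n_r}[\mathbf{\breve{W}'_l}]_{n'_l\times m'_1}
\end{equation*}
and reindex $\mathbf{\breve{T}'_j}$ as $\mathbf{\breve{T}_{k+1+j}}$ for $j=1,\dots,k'$. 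The resulting chain of length $k+k'+1$ has outer blocks $\mathbf{\breve{W}_l}$ and $\mathbf{\breve{W}'_r}$ with admissible sizes $n_l$ and $n'_r$, hence it is again in dumbbell form.

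Finally I would handle the degenerate cases for completeness. If $k=0$ and $k'=0$, the argument still applies and produces a dumbbell chain with exactly one interior $\mathbf{T}$-block, namely the seam $\mathbf{\breve{W}_r}\mathbf{\breve{W}'_l}$. If only one of $k,k'$ is zero, the reindexing argument is unchanged. I do not expect any real obstacle here; the point of the lemma is just to certify that the closure property holds, so the main content is simply observing that the seam matrix has both dimensions in $\{m_x,m_z\}$ and therefore qualifies as an interior block rather than as a boundary $\mathbf{W}$-block.
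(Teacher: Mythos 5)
Your proposal is correct and follows essentially the same approach as the paper's proof: multiply the rightmost block of the left chain by the leftmost block of the right chain, observe that the resulting seam matrix has both dimensions in $\{m_x,m_z\}$ so it qualifies as an interior block, and concatenate. You simply spell out in detail (including the degenerate $k=0$ cases) what the paper states in one line.
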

\begin{proof}
Just calculate the product of the leftmost matrix of one chain with the rightmost matrix of the other chain, then the proof can be done by the definition of the dumbbell form.   
\end{proof}
This result helps us to obtain the union dumbbell form efficiently from two dumbbell-form matrix chains. Calculating this union form requires to multiply $[\mathbf{\breve{W}_r}]_{m_{k+1}\times n_r}$ in the left chain by $[\mathbf{\breve{W}_l}]_{n_{l}\times m_1}$ in the right chain, which takes only $\mathcal{O}(nm^2)$ time. 

In addition, the dumbbell form can be related to the matrix inverse operations in the following way.
\begin{lemma}[inverse of dumbbell-form matrix chain] 
\label{lem:inverse_operation}
Let $\mathbf{\breve{W}_l}\mathbf{\breve{T}_1}\cdots\mathbf{\breve{T}_k}\mathbf{\breve{W}_r}$ be a dumbbell-form matrix chain, and assume that all the matrices are conformable, thus ensuring the following inverse operations are well-defined. Then, the matrix inverse operation with regularization $(\mathbf{I} + \mathbf{\breve{W}_l}\mathbf{\breve{T}_1}\cdots\mathbf{\breve{T}_k}\mathbf{\breve{W}_r})^{-1}$ can be expressed as $\mathbf{I}$ plus a dumbbell-form matrix chain. 
\end{lemma}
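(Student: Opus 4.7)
The plan is to apply the push-through identity (a special case of Sherman--Morrison--Woodbury),
\begin{equation*}
(\mathbf{I} + \mathbf{U}\mathbf{V})^{-1} = \mathbf{I} - \mathbf{U}(\mathbf{I} + \mathbf{V}\mathbf{U})^{-1}\mathbf{V},
\end{equation*}
with the grouping $\mathbf{U} := \mathbf{\breve{W}_l}$ (of size $n_l \times m_1$) and $\mathbf{V} := \mathbf{\breve{T}_1}\cdots\mathbf{\breve{T}_k}\mathbf{\breve{W}_r}$ (of size $m_1 \times n_r$). Conformability of the outer inverse forces $n_l = n_r$ so that $\mathbf{U}\mathbf{V}$ is square, and under the stated well-definedness assumption the smaller quantity $\mathbf{I} + \mathbf{V}\mathbf{U}$ is an invertible $m_1 \times m_1$ matrix.

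Substituting yields
\begin{equation*}
(\mathbf{I} + \mathbf{\breve{W}_l}\mathbf{\breve{T}_1}\cdots\mathbf{\breve{T}_k}\mathbf{\breve{W}_r})^{-1} = \mathbf{I} - \mathbf{\breve{W}_l}\,\mathbf{\breve{M}}\,\mathbf{\breve{T}_1}\cdots\mathbf{\breve{T}_k}\mathbf{\breve{W}_r},
\end{equation*}
where $\mathbf{\breve{M}} := (\mathbf{I} + \mathbf{\breve{T}_1}\cdots\mathbf{\breve{T}_k}\mathbf{\breve{W}_r}\mathbf{\breve{W}_l})^{-1}$. The next step is to check that the subtracted term matches the dumbbell template: its leftmost factor $\mathbf{\breve{W}_l}$ has shape $n_l \times m_1$, its rightmost factor $\mathbf{\breve{W}_r}$ has shape $m_{k+1} \times n_r$, and every internal factor---the newly introduced $m_1 \times m_1$ block $\mathbf{\breve{M}}$ prepended to $\mathbf{\breve{T}_1},\ldots,\mathbf{\breve{T}_k}$---carries both dimensions from $\{m_x, m_z\}$. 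Absorbing the minus sign into any one factor then yields the desired dumbbell-form chain. The base case $k = 0$ is handled by the same identity, reducing to $(\mathbf{I} + \mathbf{\breve{W}_l}\mathbf{\breve{W}_r})^{-1} = \mathbf{I} - \mathbf{\breve{W}_l}\mathbf{\breve{M}}\mathbf{\breve{W}_r}$ with $\mathbf{\breve{M}} = (\mathbf{I} + \mathbf{\breve{W}_r}\mathbf{\breve{W}_l})^{-1}$, which is again $\mathbf{I}$ plus a dumbbell chain by the same reasoning.

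The main obstacle is conceptual rather than algebraic: I need to verify that $\mathbf{\breve{M}}$ genuinely lives in the small $m \times m$ regime, not the large $n \times n$ regime, since this swap is precisely what the dumbbell abstraction is designed to exploit and what ultimately underwrites the $\mathcal{O}(n)$ complexity of the downstream score approximation. Fortunately, this follows directly from the dimension accounting above: the product $\mathbf{V}\mathbf{U}$ collapses the pair of outer ``wide'' factors into an $m_1 \times m_1$ block before inversion. Combined with Lemma~\ref{lem:multiply_operation}, the result will let us push analogous Woodbury reductions through the nested inverses $\mathbf{A}$, $\mathbf{B}$, $\mathbf{C}$ appearing in Eq.~(\ref{losssssss}) and express every resulting trace argument as an $\mathbf{I}$-plus-dumbbell expression, as required by the subsequent score-approximation machinery.
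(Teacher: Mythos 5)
Your proposal is correct and follows essentially the same route as the paper: both apply the Woodbury (push-through) identity $(\mathbf{I}+\mathbf{\breve{U}}\mathbf{\breve{V}})^{-1}=\mathbf{I}-\mathbf{\breve{U}}(\mathbf{I}+\mathbf{\breve{V}}\mathbf{\breve{U}})^{-1}\mathbf{\breve{V}}$ and observe that the inner inverse collapses to an $m\times m$ block. The only difference is that the paper allows an arbitrary split point $\mathbf{\breve{U}}=\mathbf{\breve{W}_l}\mathbf{\breve{T}_1}\cdots\mathbf{\breve{T}_i}$ while you fix the split at $\mathbf{\breve{U}}=\mathbf{\breve{W}_l}$, and you additionally spell out the dimension check, the sign absorption, and the $k=0$ base case, which the paper leaves implicit.
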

\begin{proof}
Here, the Woodbury matrix identity is used, which is 
\begin{equation}
\label{woodburymatrixidentity}
    (\mathbf{I} + \mathbf{\breve{U}}\mathbf{\breve{V}})^{-1} = \mathbf{I} - \mathbf{\breve{U}} (\mathbf{I} + \mathbf{\breve{V}}\mathbf{\breve{U}} )^{-1} \mathbf{\breve{V}}
\end{equation}
where $\mathbf{\breve{U}}, \mathbf{\breve{V}}$ are conformable matrices. For any partition, i.e., let $\mathbf{\breve{U}} = \mathbf{\breve{W}_l}\mathbf{\breve{T}_1}\cdots \mathbf{\breve{T}_i}$, $i\leq k$, and denote $\mathbf{\breve{V}}$ as the remain part, we can check that the size $\mathbf{\breve{U}}$ is $n_l\times m_{i+1}$, the size $\mathbf{\breve{V}}$ is $m_{i+1}\times n_r$ and thus the size of $(\mathbf{I} + \mathbf{\breve{V}}\mathbf{\breve{U}} )^{-1}$ is $m_{i+1}\times m_{i+1}$, which completes the proof.
\end{proof}

Combining Lemma~\ref{lem:multiply_operation} and~\ref{lem:inverse_operation}, we now show how to transform all terms in Eq.~(\ref{losssssss}) to a sum of dumbbell-form matrix chains.  For easy understanding, we provide an example as follows.  
\begin{example} Take $\mathbf{\tilde{K}_X}^{0,1}\mathbf{A}\mathbf{\tilde{K}_Z}^{1,0}\doteq \tilde{\mathbf{\Lambda}}_{X0}\tilde{\mathbf{\Lambda}}_{X1}^T\mathbf{A} \tilde{\mathbf{\Lambda}}_{Z1}\tilde{\mathbf{\Lambda}}_{Z0}^T$ for example. Using the Woodbury matrix identity as in Eq.~(\ref{woodburymatrixidentity}), we have
\begin{equation}
\label{a_dumb}
\mathbf{A} \doteq \frac{1}{n_1\lambda} \left[\mathbf{I} - \tilde{\mathbf{\Lambda}}_{Z1}\bigl( n_1\lambda\mathbf{I} +\tilde{\mathbf{\Lambda}}_{Z1}^T \tilde{\mathbf{\Lambda}}_{Z1} \bigl)^{-1}\tilde{\mathbf{\Lambda}}_{Z1}^T\right].
\end{equation}
%which consists of $\mathbf{I}$ plus a series of matrices within a dumbbell form. 
By Lemma~\ref{lem:multiply_operation}, $\tilde{\mathbf{\Lambda}}_{X0}[\tilde{\mathbf{\Lambda}}_{X1}^T\tilde{\mathbf{\Lambda}}_{Z1}]\tilde{\mathbf{\Lambda}}_{Z0}^T$ and $\tilde{\mathbf{\Lambda}}_{X0}[\tilde{\mathbf{\Lambda}}_{X1}^T\tilde{\mathbf{\Lambda}}_{Z1}]\bigl( n_1\lambda\mathbf{I} +\tilde{\mathbf{\Lambda}}_{Z1}^T \tilde{\mathbf{\Lambda}}_{Z1} \bigl)^{-1}$ $[\tilde{\mathbf{\Lambda}}_{Z1}^T\tilde{\mathbf{\Lambda}}_{Z1}]\tilde{\mathbf{\Lambda}}_{Z0}^T$ are both dumbbell form, thus $\mathbf{\tilde{K}_X}^{0,1}\mathbf{A}\mathbf{\tilde{K}_Z}^{1,0}$ can be expressed as a sum of two dumbbell-form matrix chains. 
\end{example}
For the term $\mathbf{B}$, using Eq.~(\ref{a_dumb}), we can obtain $[\mathbf{A}\tilde{\mathbf{\Lambda}}_{X1}]_{n_1\times m_x}$ in $\mathcal{O}(n_1m_x^2)$ time, and then $\mathbf{B}\doteq [\mathbf{A}\tilde{\mathbf{\Lambda}}_{X1}][\mathbf{A}\tilde{\mathbf{\Lambda}}_{X1}]^T$ is the dumbbell form. Using Lemma~\ref{lem:inverse_operation} and~\ref{lem:multiply_operation}, we can show that $\mathbf{C}$ can be expressed as $\mathbf{I}$ plus a sum of dumbbell-form matrix chains.  Then, similarly to the example above, we can show that all the terms in Eq.~(\ref{losssssss}) are also a sum of dumbbell-form matrix chains. 

\noindent \textbf{Computational Complexity.} Next, we explore how to perform computation effectively with the dumbbell form. We denote $m=\max\{m_x,m_z\}$. Note that according to the idea introduced before, to obtain the dumbbell form, we need $\mathcal{O}(nm^2+m^3)$ time that $\mathcal{O}(nm^2)$ is used for obtaining the union dumbbell form using Lemma~\ref{lem:multiply_operation}, and $\mathcal{O}(m^3)$ is used to calculate matrix inverse within a dumbbell-form matrix chain using Lemma~\ref{lem:inverse_operation}. For trace calculation, the cyclic property of trace operation can be used,  which is
\begin{equation}
\text{Tr}(\mathbf{\breve{W}_l}\mathbf{\breve{T}_1}\cdots\mathbf{\breve{T}_k}\mathbf{\breve{W}_r}) = \text{Tr}(\mathbf{\breve{W}_r}\mathbf{\breve{W}_l}\mathbf{\breve{T}_1}\cdots\mathbf{\breve{T}_k}) 
\end{equation}
Note that $[\mathbf{\breve{W}_r}\mathbf{\breve{W}_l}]$ has the size $m_{k+1}\times m_1$ and can be calculated in $\mathcal{O}(nm^2)$. Therefore, the time complexity to calculate the object in trace is $\mathcal{O}(nm^2)$ by matrix multiplication. Also, for calculating determinant, the Weinstein–Aronszajn identity is used, which is 
\begin{equation}
\label{det_eq}
|\mathbf{I}+\mathbf{\breve{U}}\mathbf{\breve{V}}|=|\mathbf{I}+\mathbf{\breve{V}}\mathbf{\breve{U}}|,
\end{equation}
where $\mathbf{\breve{U}} $ and $\mathbf{\breve{V}}$ are conformable matrices. Similar to the way we deal with Eq.~(\ref{woodburymatrixidentity}), we partition the dumbbell-form matrix chain to obtain $\mathbf{\breve{U}},\mathbf{\breve{V}}$, then the result $\mathbf{I}+\mathbf{\breve{V}}\mathbf{\breve{U}}$ is of size $m_{i+1}\times m_{i+1}$. Subsequent calculations can be done in $\mathcal{O}(m^3)$ time with the Cholesky decomposition. As a result, we can calculate Eq.~(\ref{losssssss}) with the time cost $\mathcal{O}(nm^2)$. Additionally, the storage cost is $\mathcal{O}(nm)$,  which is dominated by the storage of the approximate kernels.

\noindent\textbf{Detailed Computation.} Above, we provide a theoretical route, next we perform the detailed computation. 

\noindent \textbf{\textit{Results when $|z|\neq 0$.}} As above, we first obtain the dumbbell form of $\mathbf{A},\mathbf{B},\mathbf{C}$ in Eq.~(\ref{losssssss}). We define the following notions that are the key components of a dumbbell-form matrix chain. 
\begin{table}[h]
\footnotesize
 \renewcommand\arraystretch{1.5}
\centering
\scalebox{0.80}{
\begin{tabular}{l|cccccc}
\hline
Symbol & $\mathbf{P}$ & $\mathbf{E}$ & $\mathbf{F}$ &$\mathbf{V}$ & $\mathbf{U}$ & $\mathbf{S}$ \\
\hline
 Term & $\tilde{\mathbf{\Lambda}}_{X1}^T\tilde{\mathbf{\Lambda}}_{X1}$ & $\tilde{\mathbf{\Lambda}}_{Z1}^T\tilde{\mathbf{\Lambda}}_{X1}$ & $\tilde{\mathbf{\Lambda}}_{Z1}^T\tilde{\mathbf{\Lambda}}_{Z1}$ & $\tilde{\mathbf{\Lambda}}_{X0}^T\tilde{\mathbf{\Lambda}}_{X0}$ & $\tilde{\mathbf{\Lambda}}_{Z0}^T\tilde{\mathbf{\Lambda}}_{X0}$ & $\tilde{\mathbf{\Lambda}}_{Z0}^T\tilde{\mathbf{\Lambda}}_{Z0}$ \\
  Size & $m_x\times m_x$ & $m_z\times m_x$ & $m_z\times m_z$ & $m_x\times m_x$ & $m_z\times m_x$& $m_z\times m_z$ \\
 Time & $\mathcal{O}(n_1m_x^2)$ & $\mathcal{O}(n_1m_xm_z)$ & $\mathcal{O}(n_1m_z^2)$ & $\mathcal{O}(n_0m_x^2)$ & $\mathcal{O}(n_0m_xm_z)$ & $\mathcal{O}(n_0m_z^2)$ \\
\hline
\end{tabular}}
\label{tab:sdas}
\end{table}

Then, we have $({n_1\lambda})\mathbf{A}\doteq  \mathbf{I}- \tilde{\mathbf{\Lambda}}_{Z1}\mathbf{D}\tilde{\mathbf{\Lambda}}_{Z1}^T$ as in Eq.~(\ref{a_dumb}), where we denote $\mathbf{D}:=\bigl(n_1\lambda\mathbf{I} +\mathbf{F}\bigl)^{-1}$. Also, ${(n_1\lambda)^2}\mathbf{B}$ of the dumbbell form is
\begin{equation*}
\begin{split}
{(n_1\lambda)^2}\mathbf{B} \doteq [\mathbf{A}\tilde{\mathbf{\Lambda}}_{X1}][\tilde{\mathbf{\Lambda}}_{X1}^T\mathbf{A}] =  (\tilde{\mathbf{\Lambda}}_{X1} -\tilde{\mathbf{\Lambda}}_{Z1}\mathbf{D}\mathbf{E})(\tilde{\mathbf{\Lambda}}_{X1} -\tilde{\mathbf{\Lambda}}_{Z1}\mathbf{D}\mathbf{E})^T.
\end{split}
\end{equation*}
Next, we calculate $\mathbf{C}\doteq \mathbf{A}(\mathbf{I}+n_1\beta \mathbf{B})^{-1}\mathbf{A}$. We can handle the inverse operation according to the Lemma~\ref{lem:inverse_operation}, such that
\begin{equation}
\begin{split}
(\mathbf{I}+n_1\beta &\mathbf{B})^{-1} \doteq  (\mathbf{I}+n_1\beta [\mathbf{A}\tilde{\mathbf{\Lambda}}_{X1}][\tilde{\mathbf{\Lambda}}_{X1}^T\mathbf{A}])^{-1}  \\&= \mathbf{I}-n_1\beta [\mathbf{A}\tilde{\mathbf{\Lambda}}_{X1}](\mathbf{I}+n_1\beta\tilde{\mathbf{\Lambda}}_{X1}^T\mathbf{A}^2\tilde{\mathbf{\Lambda}}_{X1})^{-1}[\tilde{\mathbf{\Lambda}}_{X1}^T\mathbf{A}].
\end{split}
\end{equation}
For simplification, we denote $\mathbf{G}:=\bigl(\mathbf{I}+n_1\beta\tilde{\mathbf{\Lambda}}_{X1}^T\mathbf{A}^2\tilde{\mathbf{\Lambda}}_{X1}\bigl)^{-1}$, and the term for calculating $\mathbf{G}$ can be obtained by 
\begin{equation}
\label{term_G}
(n_1\lambda)^2\cdot \tilde{\mathbf{\Lambda}}_{X1}^T\mathbf{A}^2\tilde{\mathbf{\Lambda}}_{X1} = \mathbf{P} - 2\mathbf{E}^T\mathbf{D}\mathbf{E} + \mathbf{E}^T\mathbf{D}\mathbf{F}\mathbf{D}\mathbf{E}.    
\end{equation}
\noindent As a result, using Lemma~\ref{lem:multiply_operation}, the term $\mathbf{C}$ can be expressed as $\mathbf{I}$ plus a sum of dumbbell-form matrix chains, i.e. 
\begin{equation}
\mathbf{C} \doteq \frac{1}{(n_1\lambda)^2}\mathbf{I} + \tilde{\mathbf{\Lambda}}_{Z1}\mathfrak{A}\tilde{\mathbf{\Lambda}}_{Z1}^T +   \tilde{\mathbf{\Lambda}}_{X1}\mathfrak{D}\tilde{\mathbf{\Lambda}}_{X1}^T + \tilde{\mathbf{\Lambda}}_{X1}\mathfrak{B}\tilde{\mathbf{\Lambda}}_{Z1}^T + \tilde{\mathbf{\Lambda}}_{Z1}\mathfrak{C}\tilde{\mathbf{\Lambda}}_{X1}^T,
\end{equation}
where the terms is defined as 
\begin{equation}
\begin{split}
&\mathfrak{A} :=  -\frac{1}{(n_1\lambda)^2}(2\mathbf{I}-\mathbf{D} \mathbf{F})\mathbf{D} - \frac{n_1\beta}{ (n_1\lambda)^4} (2\mathbf{I}-\mathbf{D} \mathbf{F})\mathbf{D}\mathbf{E}\mathbf{G}\mathbf{E}^T\mathbf{D}(2\mathbf{I}-\mathbf{D} \mathbf{F})^T,\\    
&\mathfrak{B} := \frac{n_1\beta}{ (n_1\lambda)^4}\cdot\mathbf{G}\mathbf{E}^T\mathbf{D}(2\mathbf{I}-\mathbf{D}\mathbf{F})^T,~ \mathfrak{C} :=\mathfrak{B}^T,~\mathfrak{D}:=-\frac{n_1\beta}{ (n_1\lambda)^4}\mathbf{G}.
\end{split}
\end{equation}
After obtaining $\mathbf{A}, \mathbf{B}, \mathbf{C}$, we can obtain all the terms in Eq.~(\ref{losssssss}). The remain thing is to compute determinants as well as traces. 

For the term $\log \left|n_1\beta\mathbf{B}+\mathbf{I} \right|$, according to the Weinstein–Aronszajn identity as in Eq.~(\ref{det_eq}), we have 
\begin{equation}
\left|\mathbf{I}+n_1\beta\mathbf{B}\right| = |\mathbf{I}+n_1\beta \mathbf{A}\tilde{\mathbf{\Lambda}}_{X1}\tilde{\mathbf{\Lambda}}_{X1}^T\mathbf{A}| = |\mathbf{I}+n_1\beta \tilde{\mathbf{\Lambda}}_{X1}^T\mathbf{A}^2\tilde{\mathbf{\Lambda}}_{X1}|.
\end{equation}
By combining with Eq.~(\ref{term_G}), we have $\mathbf{Q} := \mathbf{I}+ n_1\beta \tilde{\mathbf{\Lambda}}_{X1}^T\mathbf{A}^2\tilde{\mathbf{\Lambda}}_{X1}$ as 
\begin{equation}
\mathbf{Q}  = \mathbf{I} + \frac{1}{n_1\gamma}\left(\mathbf{P} - 2\cdot \mathbf{E}^T\mathbf{D}\mathbf{E} + \mathbf{E}^T\mathbf{D}\mathbf{F}\mathbf{D}\mathbf{E}\right).  
\end{equation}
Then, using Cholesky decomposition, i.e. $\mathbf{Q} = \mathbf{R}\mathbf{R}^T$, we obtain the results $\log \left|n_1\beta\mathbf{B}+\mathbf{I} \right| = \log |\mathbf{Q}|=  2\cdot \log |\mathbf{R}|  = \sum_{i=1}^m 2\cdot\log \mathbf{R}_{ii}.$
And for the terms related to the trace operation, we list the results of each term as follows. For simplification, we denote $\mathbf{W}:= \frac{1}{(n_1\lambda)^2}\mathbf{P}+\mathbf{E}^T\mathfrak{A}\mathbf{E} + \mathbf{P}\mathfrak{D}\mathbf{P} + \mathbf{P}\mathfrak{B}\mathbf{E} + \mathbf{E}^T\mathfrak{C}\mathbf{P}$. Concretely,  
\begin{itemize}[leftmargin=5mm]
\item For the terms $\tilde{\mathbf{K}}_X^0, \mathbf{\tilde{K}_Z}^{0,1}\mathbf{B}\mathbf{\tilde{K}_Z}^{1,0}, \mathbf{\tilde{K}_X}^{0,1}\mathbf{A}\mathbf{\tilde{K}_Z}^{1,0}$, we have 
\begin{equation}
\begin{split}
&\text{Tr}{(\tilde{\mathbf{K}}_X^0)} = \text{Tr}{(\tilde{\mathbf{\Lambda}}_{X0}^{T}\tilde{\mathbf{\Lambda}}_{X0})} = \text{Tr}(\mathbf{V})\\
&(n_1\lambda)\cdot \text{Tr}{(\mathbf{\tilde{K}_X}^{0,1}\mathbf{A}\mathbf{\tilde{K}_Z}^{1,0})} =  \text{Tr}{\Bigl(\mathbf{U}\mathbf{E}^T(\mathbf{I}-\mathbf{D}\mathbf{F})\Bigl)}\\
&(n_1\lambda)^2\cdot \text{Tr}{(\mathbf{\tilde{K}_Z}^{0,1}\mathbf{B}\mathbf{\tilde{K}_Z}^{1,0})}=  \text{Tr}\Bigl(\mathbf{S}(\mathbf{I}-\mathbf{D}\mathbf{F})^T\mathbf{E}\mathbf{E}^T(\mathbf{I}-\mathbf{D}\mathbf{F})\Bigl).  
\end{split}
\end{equation}
\item For the term $\mathbf{\tilde{K}_X}^{0,1}\mathbf{C}\mathbf{\tilde{K}_X}^{1,0}$, we have
\begin{equation}
\text{Tr}\left(\mathbf{\tilde{K}_X}^{0,1}\mathbf{C}\mathbf{\tilde{K}_X}^{1,0}\right) = \text{Tr}\left(\mathbf{V}\mathbf{W}\right).    
\end{equation}
\item For the term $\mathbf{\tilde{K}_X}^{0,1}\mathbf{C}\mathbf{\tilde{K}_X}^{1}\mathbf{A}\mathbf{\tilde{K}_Z}^{1,0}$, we have
\begin{equation}
\begin{split}
(n_1&\lambda)\cdot\text{Tr}(\mathbf{\tilde{K}_X}^{0,1}\mathbf{C}\mathbf{\tilde{K}_X}^{1}\mathbf{A}\mathbf{\tilde{K}_Z}^{1,0}) =\text{Tr}{\left(\mathbf{U}\mathbf{W}\mathbf{E}^T(\mathbf{I}-\mathbf{D}\mathbf{F})\right). }   
\end{split}
\end{equation}
\item For the term $\mathbf{\tilde{K}_Z}^{0,1}\mathbf{A}\mathbf{\tilde{K}_X}^{1}\mathbf{C}\mathbf{\tilde{K}_X}^{1}\mathbf{A}\mathbf{\tilde{K}_Z}^{1,0}$, we have
\begin{equation}
\begin{split}
(n_1\lambda)^2\cdot\text{Tr}(\mathbf{\tilde{K}_Z}^{0,1}&\mathbf{A}\mathbf{\tilde{K}_X}^{1}\mathbf{C}\mathbf{\tilde{K}_X}^{1}\mathbf{A}\mathbf{\tilde{K}_Z}^{1,0}) \\ = ~&\text{Tr}\left(\mathbf{S}(\mathbf{I} - \mathbf{D}\mathbf{F})^T\mathbf{E}\mathbf{W}\mathbf{E}^T(\mathbf{I} - \mathbf{D}\mathbf{F})\right). 
\end{split}
\end{equation}
\end{itemize}
As a result, the overall results of trace operation is given by
\begin{equation}
\begin{split}
\text{Tr}~\Bigl\{\mathbf{\tilde{K}_X}^{0} + \mathbf{\tilde{K}_Z}^{0,1}\mathbf{B}\mathbf{\tilde{K}_Z}^{1,0} -&~2\mathbf{\tilde{K}_X}^{0,1}\mathbf{A}\mathbf{\tilde{K}_Z}^{1,0}-n_1\beta\mathbf{\tilde{K}_X}^{0,1}\mathbf{C}\mathbf{\tilde{K}_X}^{1,0}\\-n_1\beta\mathbf{\tilde{K}_Z}^{0,1}\mathbf{A}\mathbf{\tilde{K}_X}^{1}&\mathbf{C}\mathbf{\tilde{K}_X}^{1}\mathbf{A}\mathbf{\tilde{K}_Z}^{1,0} + 2n_1\beta\mathbf{\tilde{K}_X}^{0,1}\mathbf{C}\mathbf{\tilde{K}_X}^{1}\mathbf{A}\mathbf{\tilde{K}_Z}^{1,0}\Bigl\}
\\= \text{Tr}\Bigl[(\mathbf{I}-n_1\beta\mathbf{W})\Bigl(\mathbf{V}&-\frac{2}{n_1\lambda}\mathbf{E}^T(\mathbf{I}-\mathbf{D}\mathbf{F})\mathbf{U}\\&+\frac{1}{(n_1\lambda)^2}\mathbf{E}^T(\mathbf{I}-\mathbf{D}\mathbf{F})\mathbf{S}(\mathbf{I}-\mathbf{D}\mathbf{F})^T\mathbf{E}\Bigl)\Bigl].
\end{split}    
\end{equation}
Until now, we obtain the generalized score function for the non-empty condition set case with approximate kernel that can be calculated $\mathcal{O}(nm^2)$ time with $\mathcal{O}(nm)$ storage. 
% Note that even we provide speed up only for the conditional case as in Eq.~(\ref{losssssss}), we can also done it for Eq.~(\ref{loss2}) when conditional set is empty. 
% These results is given in Appendix~\ref{appendix: score}.

\noindent \textbf{\textit{Results when $|z|=0$.}} Next, we obtain the results for Eq.~(\ref{loss2}) that the conditional set is empty. According to Lemma.~\ref{lem:inverse_operation}, using the Woddbury matrix identity as in Eq.~(\ref{woodburymatrixidentity}), we have
\begin{equation}
\breve{\mathbf{B}}\doteq  \left(\mathbf{I}+\frac{1}{n_1\lambda}\tilde{\mathbf{\Lambda}}_{X1}\tilde{\mathbf{\Lambda}}_{X1}^T\right)^{-1} = \mathbf{I} - \frac{1}{n_1\lambda} \tilde{\mathbf{\Lambda}}_{X1}\breve{\mathbf{D}}\tilde{\mathbf{\Lambda}}_{X1}^T,   
\end{equation}
where $\breve{\mathbf{D}}:=\bigl(\mathbf{I}+\frac{1}{n_1\lambda}\tilde{\mathbf{\Lambda}}_{X1}^T\tilde{\mathbf{\Lambda}}_{X1} \bigl)^{-1}$. Then for the term $\left|\mathbf{I}+\frac{1}{n_1\lambda}\mathbf{\tilde{K}_X}^{1}\right|$, using the Weinstein–Aronszajn identity as in Eq.~(\ref{det_eq}), we have
\begin{equation}
\left|\mathbf{I}+\frac{1}{n_1\lambda}\mathbf{\tilde{K}_X}^{1}\right| \doteq \left|\mathbf{I}+\frac{1}{n_1\lambda}\tilde{\mathbf{\Lambda}}_{X1}\tilde{\mathbf{\Lambda}}_{X1}^T\right| =  \left|\mathbf{I}+\frac{1}{n_1\lambda}\tilde{\mathbf{\Lambda}}_{X1}^T\tilde{\mathbf{\Lambda}}_{X1}\right|,  
\end{equation}
where the size of $\tilde{\mathbf{\Lambda}}_{X1}^T\tilde{\mathbf{\Lambda}}_{X1}$ is $m_x\times m_x$. Therefore, calculate the $\breve{\mathbf{Q}} := \mathbf{I}+\frac{1}{n_1\lambda}\tilde{\mathbf{\Lambda}}_{X1}^T\tilde{\mathbf{\Lambda}}_{X1}$ cost $\mathcal{O}(nm^2)$.
Then, using Cholesky decomposition, i.e. $\breve{\mathbf{Q}} = \breve{\mathbf{R}}\breve{\mathbf{R}}^T$, we obtain the results $\log \left|\mathbf{I}+\frac{1}{n_1\lambda}\mathbf{\tilde{K}_X}^{1} \right| \doteq \log |\breve{\mathbf{Q}}|=  2\cdot \log |\breve{\mathbf{R}}|  = \sum_{i=1}^m 2\cdot\log \breve{\mathbf{R}}_{ii}$. This process cost $\mathcal{O}(m^3)$ time. As a result, the overall time cost to calculate $\log \left|\mathbf{I}+\frac{1}{n_1\lambda}\tilde{\mathbf{\Lambda}}_{X1}\tilde{\mathbf{\Lambda}}_{X1}^T \right|$ is $\mathcal{O}(nm^2)$. 
And for the terms related to the trace operation, we list the results of each term as follows. Concretely,  
\begin{itemize}[leftmargin=5mm]
\item For the term $\tilde{\mathbf{K}}_X^0$, using the cycle property of trace, we have 
\begin{equation}
\begin{split}
&\text{Tr}{(\tilde{\mathbf{K}}_X^0)} = \text{Tr}{(\tilde{\mathbf{\Lambda}}_{X0}^{T}\tilde{\mathbf{\Lambda}}_{X0})} = \text{Tr}(\mathbf{V}), 
\end{split}
\end{equation}
where we define $\mathbf{V}: = \tilde{\mathbf{\Lambda}}_{X0}^{T}\tilde{\mathbf{\Lambda}}_{X0}$ with size of $m_x\times m_x$.  
\item For the term $\mathbf{\tilde{K}_X}^{0,1}\breve{\mathbf{B}}\mathbf{\tilde{K}_X}^{1,0}$, we have 
\begin{equation}
\begin{split}
\text{Tr}\Bigl(\mathbf{\tilde{K}_X}^{0,1}\mathbf{B}\mathbf{\tilde{K}_X}^{1,0}\Bigl) &=  \text{Tr}\left(\tilde{\mathbf{\Lambda}}_{X0}\tilde{\mathbf{\Lambda}}_{X1}^{T}\left(\mathbf{I} - \frac{1}{n_1\lambda} \tilde{\mathbf{\Lambda}}_{X1}\mathbf{D}\tilde{\mathbf{\Lambda}}_{X1}^T\right)\tilde{\mathbf{\Lambda}}_{X1}\tilde{\mathbf{\Lambda}}_{X0}^{T}\right)\\&=  \text{Tr}\left(\mathbf{V}\mathbf{P} - \frac{1}{n_1\lambda}\mathbf{V}\mathbf{P}\breve{\mathbf{D}}\mathbf{P} \right),         
\end{split}
\end{equation}
where $\mathbf{P}:= \tilde{\mathbf{\Lambda}}_{X1}^T\tilde{\mathbf{\Lambda}}_{X1}$ with size of $m_x\times m_x$. 
\end{itemize}
We can check that the time cost within the trace operation is $\mathcal{O}(m^3)$ and obtain $\mathbf{V}, \mathbf{P}$ cost $\mathcal{O}(nm^2)$ time. As a  result, we provide the calculation process of our score for the case that the conditional set is empty and show that the overall time cost is $\mathcal{O}(nm^2)$. 

\section{Causal Structure Search}
\label{sec:causalsearch}
Above, we obtain the generalized score function with approximate kernel. Next, we introduce how to apply it to causal structure learning. We assume that there is no feedback or unobserved common cause in the underlying causal graph.  

Suppose that we have a $\text{DAG}$ $\mathcal{G}$ with $d$ variables, $X_1,X_2,...,X_d$, and a dataset $\mathcal{D}$ consisting of $n$ i.i.d. observations of these variables. Let the current hypothetical $\text{DAG}$ be $\mathcal{G}_h$. We obtain the local score of variable $X_i$ and its parents $\text{Pa}_i$ by setting $(X, Z)$ in the previously obtained low-rank kernel-based score function to $(X_i, \text{Pa}_i)$. We denote this local score as ${S}_{\text{LR}}(X_i,\text{Pa}_i)$. Note that we allow $\text{Pa}_i$ to be an empty set, this corresponds to the case that $Z$ is empty. Then, following the decomposable property of the score function, i.e., it can be written as a sum of measures, which is a function of only one variable and its parents, we can obtain the score function of $\text{DAG}$ $\mathcal{G}_h$ as 
\begin{equation}
\label{score_lowrank}
{S}_{\text{LR}}(\mathcal{G}_h, \mathcal{D}) = \sum_{i=1}^d {S}_{\text{LR}}(X_i,\text{Pa}^{\mathcal{G}_h}_i),    
\end{equation} 
where $\text{Pa}^{\mathcal{G}_h}_i$ indicates the parents of $X_i$ in the graph $\mathcal{G}_h$. Given this score function, we are concerned about whether it enables the search procedure to find the correct equivalence class consistent with data distribution as $n \rightarrow \infty$. To see that, we first introduce an important property for a score function named local consistency~\cite{chickering2002optimal}, which represents whether a score function is a suitable optimization criterion that faithfully reflects structural correctness.
Then, we will show that when ${S}_{\text{LR}}(\mathcal{G}_h, \mathcal{D})$ is a good enough approximation of the CV likelihood score ${S}_{\text{CV}}(\mathcal{G}_h, \mathcal{D})$, ${S}_{\text{LR}}(\mathcal{G}_h, \mathcal{D})$ has a locally consistent property. 

\noindent \textbf{Score Local Consistency.} The local consistency property ensures that for two DAGs only different in one edge, the DAG that gives one more correct (conditional) independence constraint has a larger score. Formally, the score local consistency is defined as follows. 
\begin{definition}[Score Local Consistency~\cite{chickering2002optimal}] Let $\mathcal{G}$ be any DAG, and $\mathcal{G}'$ be the DAG that results from adding the edge $X_i \rightarrow X_j$
on $\mathcal{G}$,  $\mathcal{D}$ be the dataset from distribution $p(\cdot)$. A score function $S(\mathcal{G};\mathcal{D})$ is locally consistent if the following two properties hold as the sample size $n\rightarrow \infty$:
\begin{enumerate}
    \item If $X_j \not\!\perp\!\!\!\perp X_i|PA^\mathcal{G}_j$, then $S(\mathcal{G}';\mathcal{D}) > S (\mathcal{G};\mathcal{D})$.
    \item If $X_j \perp \!\!\! \perp X_i|PA^\mathcal{G}_j$, then $S(\mathcal{G};\mathcal{D}) > S (\mathcal{G}';\mathcal{D})$.
\end{enumerate}
\end{definition}
It has been shown that under {all conditions given in Lemma 3 in~\cite{huang2018generalized}}, the CV likelihood score is locally consistent. With a good enough approximation, ${S}_{\text{LR}}(\mathcal{G}_h, \mathcal{D})$ also naturally satisfies the local consistency. 
Furthermore, when there are significant local differences in scores, the necessary approximation accuracy may be reduced by preserving the relative magnitudes of the scores. 

When the score is locally consistent, the greedy equivalence search (GES) algorithm~\cite{chickering2002optimal,chickering2020statistically} guarantees to find the Markov equivalence class that is consistent with the data generative distribution asymptotically~\cite{huang2018generalized}. The reason is that GES searches for
the maximum local score change between adjacent equivalence classes.

\noindent \textbf{GES algorithm overview:} The algorithm starts with an empty graph and initializes the score. It then performs two phases: forward and backward. In the forward phase, edges are incrementally added, with each addition evaluated for the best improvement in score. This continues until no further improvements are possible. In the backward phase, edges are iteratively removed, with each removal evaluated for the best score improvement, continuing until no further improvements can be made.

As a result, the GES search procedure combined with our score ${S}_{\text{LR}}(\mathcal{G}_h, \mathcal{D})$ can learn the correct equivalence class when the sample size $n \rightarrow \infty$ and the approximation is good enough to preserve localized differences of score. 
% \begin{algorithm}
% \caption{Greedy equivalent search with low-rank score}
% \label{alg3}
% \begin{flushleft}
% \textbf{Input:} An empty graph $\mathcal{G}_h$. Score Eq.~(\ref{score_lowrank}) initize with $\mathcal{G}_h$.\\
% \textbf{Output:} Markov equivalence class.
% \end{flushleft}
% \begin{algorithmic}[1]
% \State \textbf{Forward Phase: } incrementally adds edges to the graph. 
% \State evaluates all possible edge additions and selects the one that most improves the score of the graph. Add the best edge and update the score.
% \State Repeat until no more edges can be added to improve the score.
% \State \textbf{Backward Phase: } iteratively removes edges from the graph.
% \State evaluates all possible edge deletions and selects the one that most improves the score of the graph. Remove the best edge and update the score.
% \State Repeat until no more edges can be removed to improve the score.
% \end{algorithmic}
% \end{algorithm}

\noindent \textbf{Discussions.} As analyzed earlier, the GES algorithm's advantage lies in its theoretical guarantee of consistency when paired with our score function, while our acceleration method addresses the computational challenge associated with large sample sizes, ensuring the framework's effectiveness and stability across various settings. However, GES faces the scalability issue when the number of graph nodes increases, because the search algorithm becomes computationally demanding. To tackle this issue, various strategies~\citep{ramsey2017million, zhang2024towards} have been proposed, in which leveraging modern continuous optimization techniques~\cite{Notears} to enhance scalability is a promising direction. Exploring more efficient search algorithms that integrate seamlessly with our score function represents a promising avenue for future research, potentially addressing these challenges.

\section{Performance Evaluation}
In this section, we first describe the experimental settings, then present the experimental results of sampling parameter selection, run-time reduction, and causal discovery performance on both synthetic and real-world data.  
\label{sec:experimental}
\subsection{Experimental Settings}
\textbf{Compared methods.} Our method is compared with the following existing methods. For constraint-based methods, we choose the PC algorithm~\cite{spirtes2001causation} and the max-min Markov blanket (MM-MB) search with symmetry correction~\cite{tsamardinos2003time} as they are widely used baselines.
We use KCIT~\cite{zhang2012kernel} to evaluate conditional independence in both algorithms since it is a non-parametric test that enables handling nonlinear causal relationships and various distributions. 
%For the search procedure, the widely-used PC algorithm~\cite{spirtes2001causation} and the max-min Markov blanket (MM-MB) search with symmetry correction~\cite{tsamardinos2003time} are employed. 
In the sequel, we denote these two methods as PC and MM, respectively. For score-based methods, we implement GES~\cite{chickering2002optimal} with four different score functions include BIC~\cite{schwarz1978estimating}, BDeu~\cite{buntine1991theory}, SC~\cite{sokolova2014causal} and CV likelihood score. BIC score can only be applied for continuous data, while BDeu score can only be applied for discrete data. SC score, which uses Spearman rank correlation to approximate mutual information, is unsuitable for multi-dimensional data. CV likelihood score introduced in Sec.~\ref{sec:generalizedscore} can be applied to all the evaluation settings. In the subsequent experiments, they are denoted as BIC, BDeu, SC and CV, respectively. For BDeu score, the equivalent sample size is set to $n'=1$. For MM and PC, we set the median distance for the kernel width of KCI, and the significance level as $0.05$ for (conditional) independence test. For CV likelihood score, we adhere to its recommended setting, employing $10$-fold cross-validation and setting the kernel width to twice the median distance. In addition to these methods, we also provide in the Appendices the results of some other compared methods that in some cases perform poorly and therefore are not included in the evaluation of the main experiments.

\noindent\textbf{Setting of our method.}
Our method is a low-rank approximation of CV, so denoted by CV-LR. In experiments, for low-rank approximation, we set the maximal rank parameter $m$ to $100$, i.e, the number of pivots for sampling is no more than $100$. The reason for this choice will be explained in Sec.~\ref{subsec:samplingparameterchoice}. Additionally, as we mention in Sec.~\ref{sec:generalizedlowrank}, for the discrete case that the number of distinct values is less than $m$, Alg.~\ref{alg2} is used. And for the remaining cases, Alg.~\ref{alg1} is used with the precision parameter set as $10^{-6}$. For the other parameters, all align with those used in CV. 

\noindent \textbf{Implementation Details.}
For score-based methods, we employ the GES algorithm, leveraging the implementation provided in the \texttt{causal-learn} package~\footnote{\url{https://github.com/py-why/causal-learn}}. For the other methods, we use their original implementations if available. For those without code, such as SC and MM-MB, we implement them from scratch. All methods are tested on the same datasets and hardware for fairness. The experiments are conducted on a system with an Intel Core i9-10900T processor (10 cores, 20 threads) and an NVIDIA RTX 3080 GPU. More details, including the experimental code, are available at \url{https://github.com/zhc-hydrion/2025KDD_CV_LR}.

\noindent\textbf{Accuracy measurements.} We consider two accuracy measurements. (1) F1 score that combines precision and recall into a single metric, is employed to measure the accuracy of the recovered causal skeleton. (2) The normalized structural hamming distance (SHD)~\cite{tsamardinos2006max}, which assesses the accuracy of the recovered causal directions by calculating the difference between the recovered Markov equivalence class and the true one.

\subsection{Sampling Parameter Choice}
\label{subsec:samplingparameterchoice}
Here, we determine the sampling parameter for the following experiments. We use two types of data: continuous data generated according to the process described in Sec.~\ref{syntheticdata} and discrete data sampled from the CHILD dataset, as outlined in Sec.~\ref{benchmarkdatasets}. For each data type, we explore two scenarios: one with an empty conditional variable set and another where the conditional variable set contains $6$ variables. We select a node as $X$ and (optional) choose $6$ variables as the conditional set, then evaluate both CV and CV-LR scores. We adjust $m$ to ensure the relative approximation error is no more than $0.5\%$, and find that $m=100$ meets this requirement (see the Appendix~\ref{appendix: sub: Sampling Parameter Choice} for more details). Therefore, $m=100$ is used for all subsequent experiments. Note that since the low-rank algorithm is adaptive so the actual number of samplings does not necessarily reach $m$, i.e., $m=100$ serves as an upper bound. It may be much smaller than 100 when the structure of the data is simpler.

\begin{figure}
\centering
\includegraphics[width=8.4cm]{./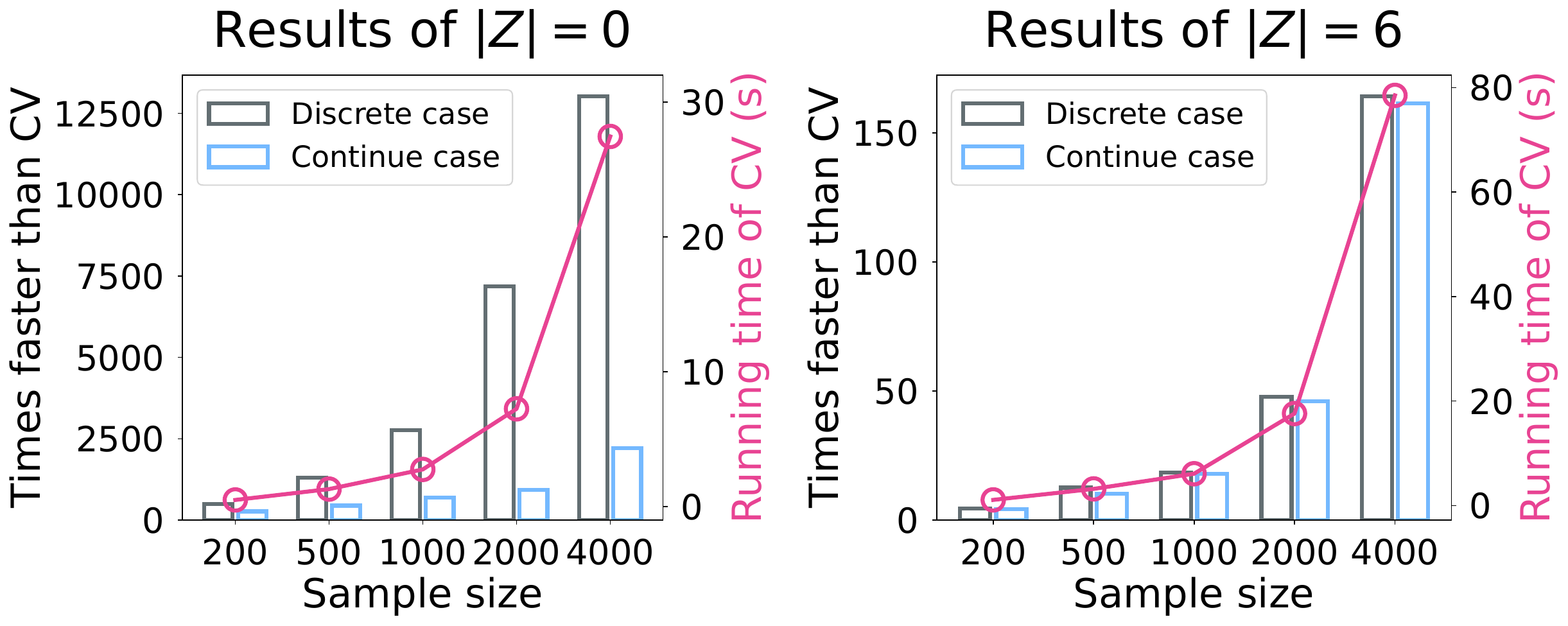}
\caption{The run-time results of CV and CV-LR under various sample size settings. In each subfigure, the results of the bar graph correspond to the left y-axis and the results of the curve correspond to the right y-axis.}
\label{fig:runningtime}
\end{figure}

\subsection{Results of Computation Speedup}
As our method is a fast version of the CV likelihood score, here we compare them in terms of computation efficiency. 
Fig.~\ref{fig:runningtime} shows the run-time results (for a single calculation of score) of CV and CV-LR under similar settings in Sec.~\ref{subsec:samplingparameterchoice}.
Overall, the relative difference in speed between CV-LR and CV rapidly grows with sample size. Comparing the results for 
$|Z|=0$ and $|Z|=6$, we observe that CV-LR demonstrates significant speedup when the conditional set is empty. This is because simpler data, with a lower rank of the kernel matrix, allows the algorithm to terminate earlier, resulting in fewer samplings. A similar pattern is obvious in both discrete and continuous settings for $|Z|=0$, with the discrete case showing a notably larger speedup ratio. This boosted speedup can be attributed to the specialized algorithm design in Sec.~\ref{sec:generalizedlowrank}. When $|Z|=6$, where the conditional set is larger and the structure of data more complex, the algorithm requires more samplings. Overall, CV-LR achieves significant speedup while maintaining accuracy, even with small sample size ($200$). In larger sample setting ($n=4000$), our method achieves speedup of over 150x, and in cases where the data structure is fully utilized (e.g. 
$|Z|=0$), it achieves speedup of 2000x in continuous data and 10,000x in discrete data.

\subsection{Performance on Synthetic Data}
\label{syntheticdata}
\textbf{Data generation process.} To demonstrate the causal discovery 
 effectiveness and versatility of the proposed method across various scenarios, we generate several types of data. These include continuous data, where all variables in the causal graph are continuous; mixed continuous and discrete data, where some variables are continuous and others are discrete with a $0.5$ probability; and multi-dimensional data, where variables have dimensions ranging from $1$ to $5$. The data for each variable 
$X_i$ is created using the following functional causal model:
\begin{equation}
X_i = g_i(f_i(\text{Pa}_i)+\epsilon_i)
\end{equation}
where $f_i$ denotes the causal mechanism, selected randomly from among linear functions, sine functions, tanh functions, logarithmic functions, or their combinations. $g_i$ represents the post-nonlinear distortion applied to variable $X_i$, which is chosen from linear, exponential, and power functions with power value ranging from $1$ to $3$. Meanwhile, $\epsilon_i$ is the noise term, which is randomly selected from Gaussian and uniform distributions. 

We create causal structures with varying graph densities ranging from $0.2$ to $0.8$, where the density is defined as the ratio of the number of edges to the maximum possible
number of edges in the graph. Each graph includes $7$ variables. Additionally, we generate the data with different sample sizes of $n$=200, 500 and 1000. For every combination of graph density, data type, and sample size, we generate $20$ separate realizations. The results with sample size $200$, $1000$ are shown in Fig.~\ref{fig:synthetic_200}, Fig.~\ref{fig:synthetic_500}, Fig.~\ref{fig:synthetic_1000} respectively. 
% We also provide the results of sample size $500$ in  Appendix~\ref{appendix: sub: Synthetic Data}. 
\begin{figure}
\centering
\includegraphics[width=8.3cm]{./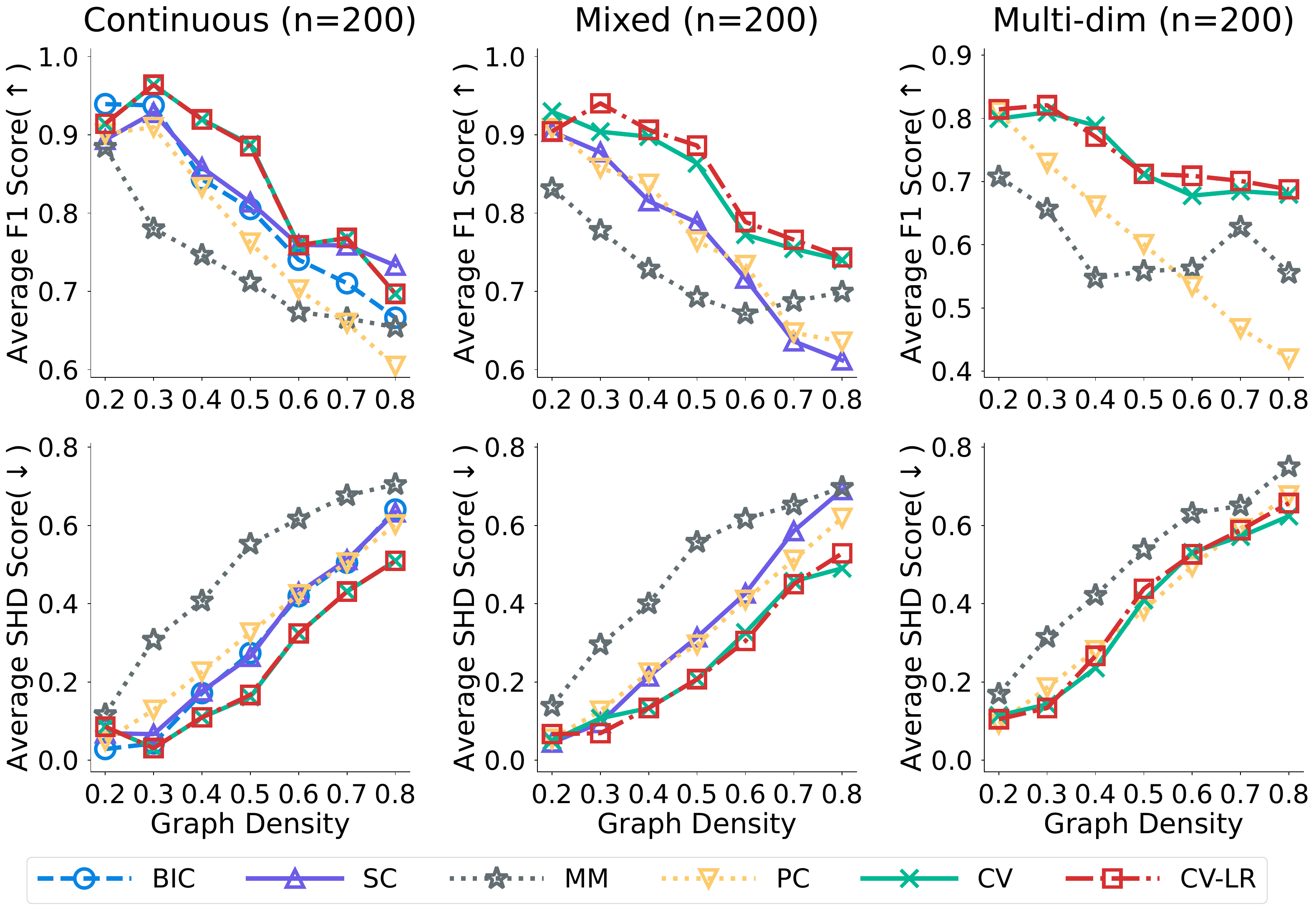}
\caption{The F1/SHD score of recovered causal graphs with sample size $n=200$. Left: continuous data. Middle: mixed continuous and discrete data. Right: multi-dimensional data.}
\label{fig:synthetic_200}
\end{figure}

\begin{figure}
\centering
\includegraphics[width=8.1cm]{./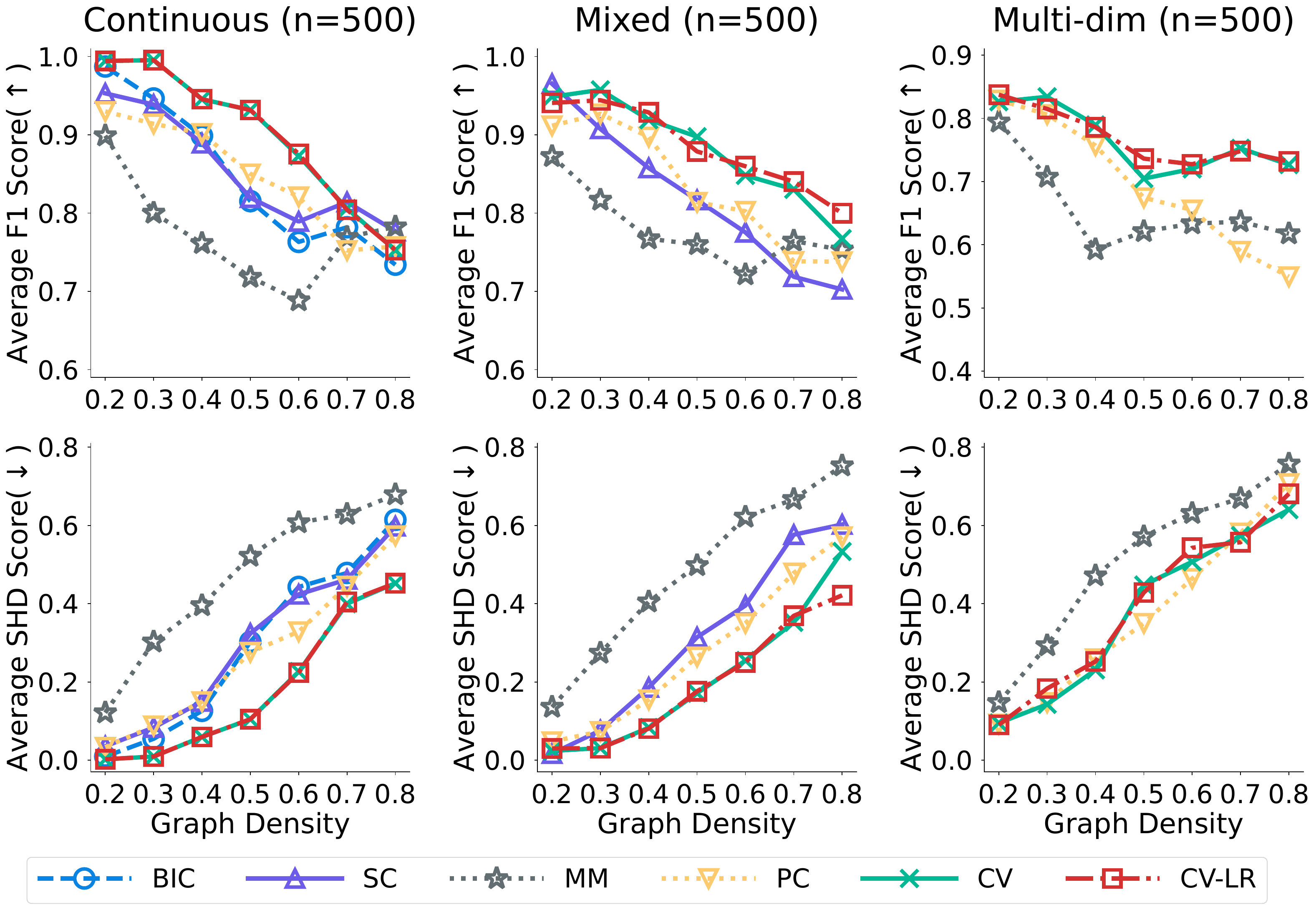}
\caption{The F1/SHD score of recovered causal graphs with sample size $n=500$. Left: continuous data. Middle: mixed continuous and discrete data. Right: multi-dimensional data.}
\label{fig:synthetic_500}
\end{figure}

\noindent\textbf{Results analysis.} By comparing CV-LR with CV, we observe that CV-LR's results are generally consistent with those of CV across most scenarios, showing only minor fluctuation. Specifically, CV-LR achieves nearly identical outcome to CV in continuous setting, demonstrating its effectiveness in this context. In the other two settings where mixed variables and multidimensional variables introduce additional complexity, small deviation occurs due to the limited sampling parameter $m=100$. However, CV-LR still maintains a valid approximation overall. Regarding accuracy, CV-LR and CV achieve the best results in most cases, particularly in high-density multidimensional data setting. In contrast, constraint-based methods cannot handle high-density multidimensional setting well, this may be because the power of conditional independence tests decreases as the conditioning set grows with graph density. These findings highlight the robustness of our approximation across various data types and its effectiveness across different sample sizes.

\subsection{Performance on Real-world Datasets}
\label{benchmarkdatasets}
Here, we evaluate the causal discovery performance of the proposed method on two benchmark discrete networks, the SACHS network and CHILD network, where all variables are discrete with cardinality ranging from $1$ to $6$. The SACHS network comprises $11$ variables and $17$ edges, while the CHILD network consists of $20$ variables with $25$ edges. Therefore, both networks are relatively sparse in nature. For each network, we randomly choose data points with sample size $n$ = 200, 500, 1000 and 2000 and repeat $20$ times in each case. The results are shown in Fig.~\ref{fig:real_ex}. 

\noindent\textbf{Result analysis.} By comparing CV-LR to CV, we find that both methods yield nearly identical accuracy results, but CV-LR significantly outperforms CV in terms of runtime. Specifically, CV requires approximately 2 hours and 4 hours for the two datasets, whereas CV-LR completes the same tasks in just about 5 seconds and 20 seconds, showing a speedup of over 1000x and 600x, respectively. This highlights the superior performance of CV-LR on discrete data, particularly in sparse network structures. Regarding causal discovery accuracy, CV and CV-LR achieve the highest accuracy on the SACHS dataset across all sample sizes, while BDeu score performs better on the CHILD dataset. Overall, F1 score gets improved with sample size for most of methods, indicating that accuracy increases with sample size on both the SACHS and CHILD datasets.

\begin{figure}
\centering
\includegraphics[width=8.3cm]{./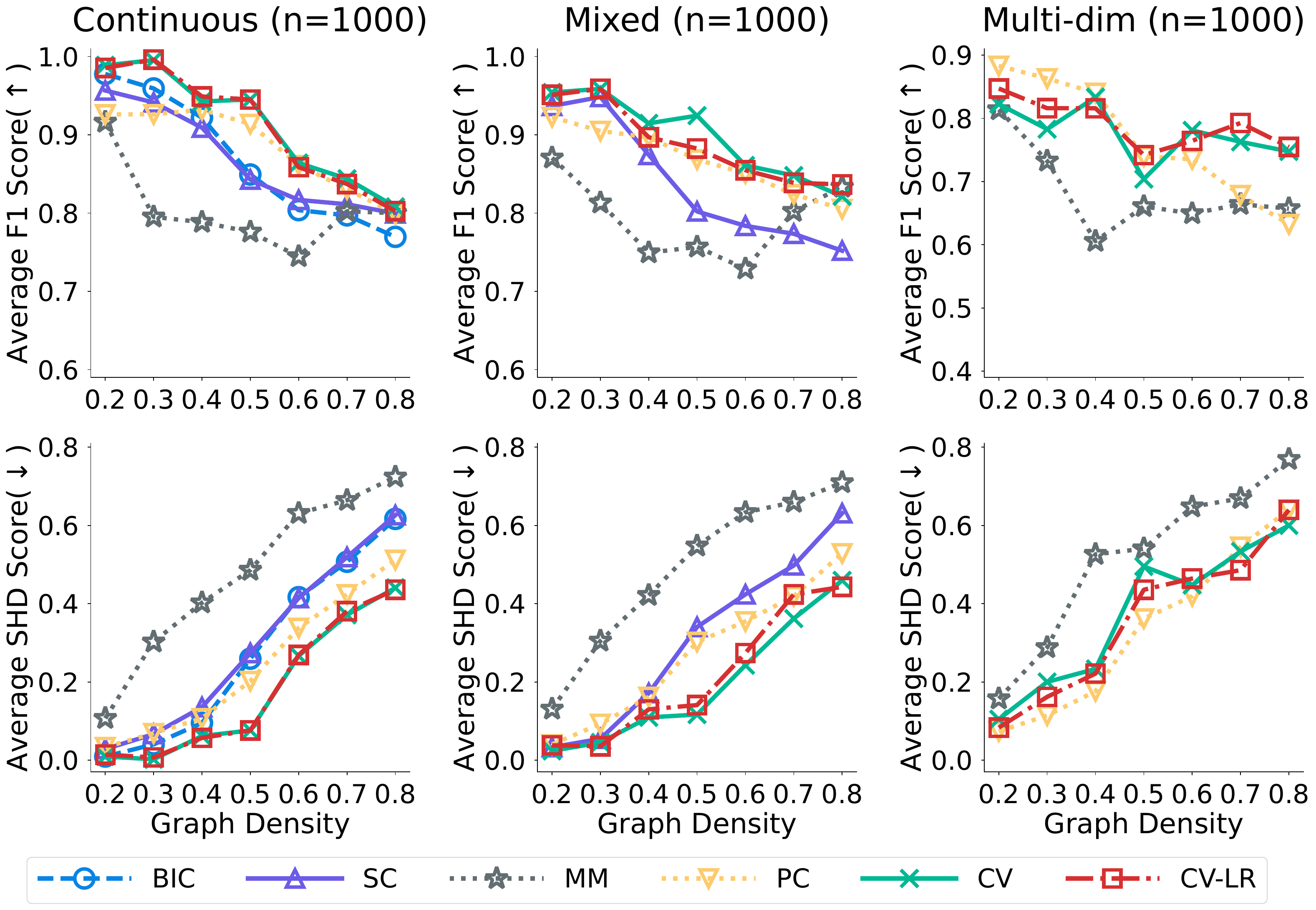}
\caption{The F1/SHD score of recovered causal graphs with sample size $n=1000$. Left: continuous data. Middle: mixed continuous and discrete data. Right: multi-dimensional data.}
\label{fig:synthetic_1000}
\end{figure}

\begin{figure}
\centering
\includegraphics[width=8.4cm]{./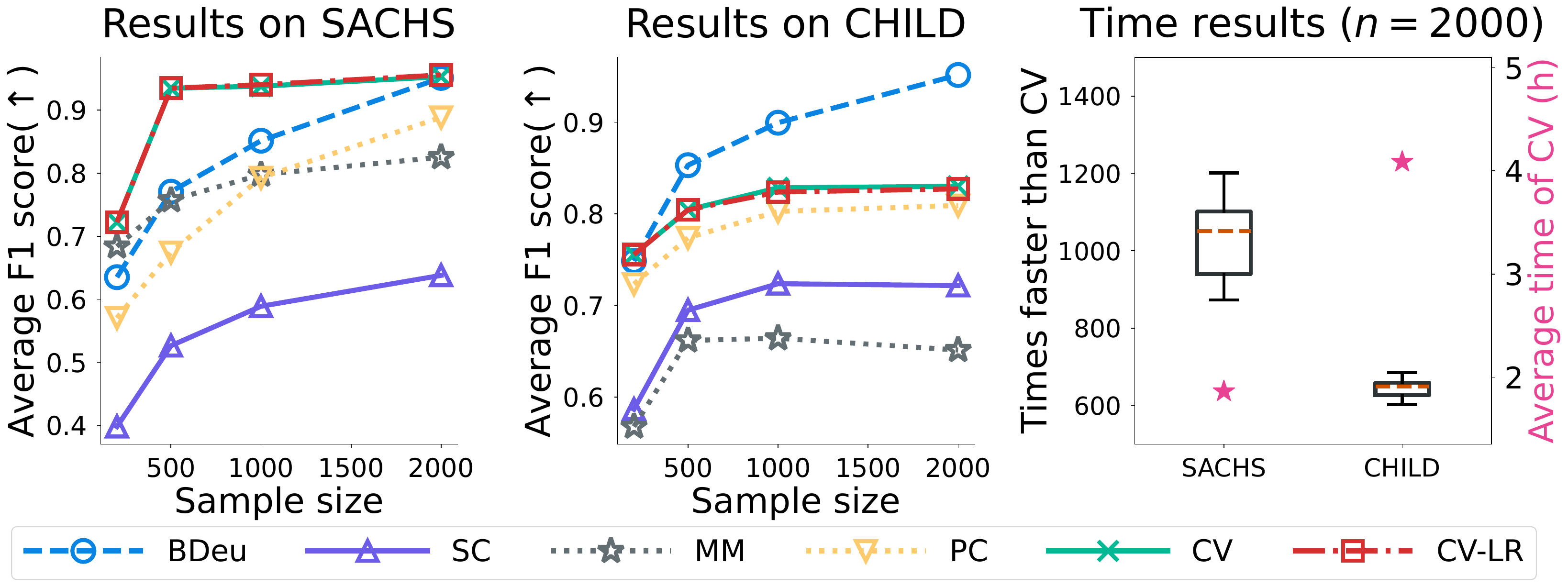}
\caption{Left two: F1 score results on the two discrete networks. Right: run-time results of CV and CV-LR on two discrete networks with sample size $n=2000$.  Results marked with pentagrams correspond to the right y-axis.}
\label{fig:real_ex}
\end{figure}

\section{Conclusion and Future Work}
\label{sec:conclusion}
This paper proposes a fast causal discovery method by approximate kernel-based generalized score function using a low-rank technique and a set of rules to manage the complex composite matrix operations required for score calculation, achieving $\mathcal{O}(n)$ time and space complexity. Additionally, we develop sampling algorithms tailored to different data types, enhancing efficiency in handling diverse datasets. Extensive experiments on both synthetic and real-world data demonstrate that our method significantly reduces computational costs compared to existing methods while maintaining comparable accuracy, particularly with large datasets. 
Future work will focus on boosting kernel selection efficiency within the score function and integrating the method with search algorithms capable of handling datasets with more variables.

\vspace{1\baselineskip}
\vspace{1\baselineskip}
\vspace{1\baselineskip}

%%
%% The next two lines define the bibliography style to be used, and
%% the bibliography file.
% \newpage
\bibliographystyle{ACM-Reference-Format}
% \balance
\bibliography{sample-base}

%%% -*-BibTeX-*-
%%% Do NOT edit. File created by BibTeX with style
%%% ACM-Reference-Format-Journals [18-Jan-2012].

\begin{thebibliography}{60}

%%% ====================================================================
%%% NOTE TO THE USER: you can override these defaults by providing
%%% customized versions of any of these macros before the \bibliography
%%% command.  Each of them MUST provide its own final punctuation,
%%% except for \shownote{}, \showDOI{}, and \showURL{}.  The latter two
%%% do not use final punctuation, in order to avoid confusing it with
%%% the Web address.
%%%
%%% To suppress output of a particular field, define its macro to expand
%%% to an empty string, or better, \unskip, like this:
%%%
%%% \newcommand{\showDOI}[1]{\unskip}   % LaTeX syntax
%%%
%%% \def \showDOI #1{\unskip}           % plain TeX syntax
%%%
%%% ====================================================================

\ifx \showCODEN    \undefined \def \showCODEN     #1{\unskip}     \fi
\ifx \showDOI      \undefined \def \showDOI       #1{#1}\fi
\ifx \showISBNx    \undefined \def \showISBNx     #1{\unskip}     \fi
\ifx \showISBNxiii \undefined \def \showISBNxiii  #1{\unskip}     \fi
\ifx \showISSN     \undefined \def \showISSN      #1{\unskip}     \fi
\ifx \showLCCN     \undefined \def \showLCCN      #1{\unskip}     \fi
\ifx \shownote     \undefined \def \shownote      #1{#1}          \fi
\ifx \showarticletitle \undefined \def \showarticletitle #1{#1}   \fi
\ifx \showURL      \undefined \def \showURL       {\relax}        \fi
% The following commands are used for tagged output and should be
% invisible to TeX
\providecommand\bibfield[2]{#2}
\providecommand\bibinfo[2]{#2}
\providecommand\natexlab[1]{#1}
\providecommand\showeprint[2][]{arXiv:#2}

\bibitem[Bach and Jordan(2002a)]%
        {bach2002learning}
\bibfield{author}{\bibinfo{person}{Francis Bach} {and} \bibinfo{person}{Michael Jordan}.} \bibinfo{year}{2002}\natexlab{a}.
\newblock \showarticletitle{Learning graphical models with Mercer kernels}.
\newblock \bibinfo{journal}{\emph{Advances in Neural Information Processing Systems}}  \bibinfo{volume}{15} (\bibinfo{year}{2002}).
\newblock


\bibitem[Bach and Jordan(2002b)]%
        {bach2002kernel}
\bibfield{author}{\bibinfo{person}{Francis~R Bach} {and} \bibinfo{person}{Michael~I Jordan}.} \bibinfo{year}{2002}\natexlab{b}.
\newblock \showarticletitle{Kernel independent component analysis}.
\newblock \bibinfo{journal}{\emph{Journal of Machine Learning Research}} \bibinfo{volume}{3}, \bibinfo{number}{Jul} (\bibinfo{year}{2002}), \bibinfo{pages}{1--48}.
\newblock


\bibitem[Bello et~al\mbox{.}(2022)]%
        {DAGMA}
\bibfield{author}{\bibinfo{person}{Kevin Bello}, \bibinfo{person}{Bryon Aragam}, {and} \bibinfo{person}{Pradeep Ravikumar}.} \bibinfo{year}{2022}\natexlab{}.
\newblock \showarticletitle{DAGMA: Learning DAGs via M-matrices and a Log-Determinant Acyclicity Characterization}. In \bibinfo{booktitle}{\emph{Advances in Neural Information Processing Systems}}, \bibfield{editor}{\bibinfo{person}{S.~Koyejo}, \bibinfo{person}{S.~Mohamed}, \bibinfo{person}{A.~Agarwal}, \bibinfo{person}{D.~Belgrave}, \bibinfo{person}{K.~Cho}, {and} \bibinfo{person}{A.~Oh}} (Eds.), Vol.~\bibinfo{volume}{35}. \bibinfo{publisher}{Curran Associates, Inc.}, \bibinfo{pages}{8226--8239}.
\newblock
\urldef\tempurl%
\url{https://proceedings.neurips.cc/paper_files/paper/2022/file/36e2967f87c3362e37cf988781a887ad-Paper-Conference.pdf}
\showURL{%
\tempurl}


\bibitem[B{\"u}hlmann et~al\mbox{.}(2014)]%
        {buhlmann2014cam}
\bibfield{author}{\bibinfo{person}{Peter B{\"u}hlmann}, \bibinfo{person}{Jonas Peters}, {and} \bibinfo{person}{Jan Ernest}.} \bibinfo{year}{2014}\natexlab{}.
\newblock \showarticletitle{CAM: Causal Additive Models, High-dimensional Order Search and Penalized Regression}.
\newblock \bibinfo{journal}{\emph{The Annals of Statistics}} \bibinfo{volume}{42}, \bibinfo{number}{6} (\bibinfo{year}{2014}), \bibinfo{pages}{2526--2556}.
\newblock


\bibitem[Buntine(1991)]%
        {buntine1991theory}
\bibfield{author}{\bibinfo{person}{Wray Buntine}.} \bibinfo{year}{1991}\natexlab{}.
\newblock \showarticletitle{Theory refinement on Bayesian networks}.
\newblock In \bibinfo{booktitle}{\emph{Uncertainty Proceedings 1991}}. \bibinfo{publisher}{Elsevier}, \bibinfo{pages}{52--60}.
\newblock


\bibitem[Chickering(2002)]%
        {chickering2002optimal}
\bibfield{author}{\bibinfo{person}{David~Maxwell Chickering}.} \bibinfo{year}{2002}\natexlab{}.
\newblock \showarticletitle{Optimal structure identification with greedy search}.
\newblock \bibinfo{journal}{\emph{Journal of Machine Learning Research}} \bibinfo{volume}{3}, \bibinfo{number}{Nov} (\bibinfo{year}{2002}), \bibinfo{pages}{507--554}.
\newblock


\bibitem[Chickering(2020)]%
        {chickering2020statistically}
\bibfield{author}{\bibinfo{person}{Max Chickering}.} \bibinfo{year}{2020}\natexlab{}.
\newblock \showarticletitle{Statistically efficient greedy equivalence search}. In \bibinfo{booktitle}{\emph{Conference on Uncertainty in Artificial Intelligence}}. Pmlr, \bibinfo{pages}{241--249}.
\newblock


\bibitem[Drineas and Mahoney(2005)]%
        {drineas2005approximating}
\bibfield{author}{\bibinfo{person}{Petros Drineas} {and} \bibinfo{person}{Michael~W Mahoney}.} \bibinfo{year}{2005}\natexlab{}.
\newblock \showarticletitle{Approximating a gram matrix for improved kernel-based learning}. In \bibinfo{booktitle}{\emph{International Conference on Computational Learning Theory}}. Springer, \bibinfo{pages}{323--337}.
\newblock


\bibitem[Fukumizu et~al\mbox{.}(2004)]%
        {fukumizu2004dimensionality}
\bibfield{author}{\bibinfo{person}{Kenji Fukumizu}, \bibinfo{person}{Francis~R Bach}, {and} \bibinfo{person}{Michael~I Jordan}.} \bibinfo{year}{2004}\natexlab{}.
\newblock \showarticletitle{Dimensionality reduction for supervised learning with reproducing kernel Hilbert spaces}.
\newblock \bibinfo{journal}{\emph{Journal of Machine Learning Research}} \bibinfo{volume}{5}, \bibinfo{number}{Jan} (\bibinfo{year}{2004}), \bibinfo{pages}{73--99}.
\newblock


\bibitem[Fukumizu et~al\mbox{.}(2008)]%
        {fukumizu2008characteristic}
\bibfield{author}{\bibinfo{person}{Kenji Fukumizu}, \bibinfo{person}{Arthur Gretton}, \bibinfo{person}{Bernhard Sch{\"o}lkopf}, {and} \bibinfo{person}{Bharath~K Sriperumbudur}.} \bibinfo{year}{2008}\natexlab{}.
\newblock \showarticletitle{Characteristic kernels on groups and semigroups}.
\newblock \bibinfo{journal}{\emph{Advances in Neural Information Processing Systems}}  \bibinfo{volume}{21} (\bibinfo{year}{2008}).
\newblock


\bibitem[Fukumizu et~al\mbox{.}(2007)]%
        {fukumizu2007kernel}
\bibfield{author}{\bibinfo{person}{Kenji Fukumizu}, \bibinfo{person}{Arthur Gretton}, \bibinfo{person}{Xiaohai Sun}, {and} \bibinfo{person}{Bernhard Sch{\"o}lkopf}.} \bibinfo{year}{2007}\natexlab{}.
\newblock \showarticletitle{Kernel measures of conditional dependence}.
\newblock \bibinfo{journal}{\emph{Advances in Neural Information Processing Systems}}  \bibinfo{volume}{20} (\bibinfo{year}{2007}).
\newblock


\bibitem[Gardner et~al\mbox{.}(2018)]%
        {gardner2018gpytorch}
\bibfield{author}{\bibinfo{person}{Jacob Gardner}, \bibinfo{person}{Geoff Pleiss}, \bibinfo{person}{Kilian~Q Weinberger}, \bibinfo{person}{David Bindel}, {and} \bibinfo{person}{Andrew~G Wilson}.} \bibinfo{year}{2018}\natexlab{}.
\newblock \showarticletitle{Gpytorch: Blackbox matrix-matrix gaussian process inference with gpu acceleration}.
\newblock \bibinfo{journal}{\emph{Advances in Neural Information Processing Systems}}  \bibinfo{volume}{31} (\bibinfo{year}{2018}).
\newblock


\bibitem[Geiger and Heckerman(1994)]%
        {geiger1994learning}
\bibfield{author}{\bibinfo{person}{Dan Geiger} {and} \bibinfo{person}{David Heckerman}.} \bibinfo{year}{1994}\natexlab{}.
\newblock \showarticletitle{Learning gaussian networks}. In \bibinfo{booktitle}{\emph{Uncertainty in Artificial Intelligence}}. Elsevier, \bibinfo{pages}{235--243}.
\newblock


\bibitem[Heckerman et~al\mbox{.}(1995)]%
        {heckerman1995learning}
\bibfield{author}{\bibinfo{person}{David Heckerman}, \bibinfo{person}{Dan Geiger}, {and} \bibinfo{person}{David~M Chickering}.} \bibinfo{year}{1995}\natexlab{}.
\newblock \showarticletitle{Learning Bayesian networks: The combination of knowledge and statistical data}.
\newblock \bibinfo{journal}{\emph{Machine Learning}}  \bibinfo{volume}{20} (\bibinfo{year}{1995}), \bibinfo{pages}{197--243}.
\newblock


\bibitem[Huang et~al\mbox{.}(2018)]%
        {huang2018generalized}
\bibfield{author}{\bibinfo{person}{Biwei Huang}, \bibinfo{person}{Kun Zhang}, \bibinfo{person}{Yizhu Lin}, \bibinfo{person}{Bernhard Sch{\"o}lkopf}, {and} \bibinfo{person}{Clark Glymour}.} \bibinfo{year}{2018}\natexlab{}.
\newblock \showarticletitle{Generalized score functions for causal discovery}. In \bibinfo{booktitle}{\emph{Proceedings of the 24th ACM SIGKDD International Conference on Knowledge Discovery \& Data Mining}}. \bibinfo{pages}{1551--1560}.
\newblock


\bibitem[Hyv{\"a}rinen and Smith(2013)]%
        {hyvarinen2013pairwise}
\bibfield{author}{\bibinfo{person}{Aapo Hyv{\"a}rinen} {and} \bibinfo{person}{Stephen~M Smith}.} \bibinfo{year}{2013}\natexlab{}.
\newblock \showarticletitle{Pairwise likelihood ratios for estimation of non-Gaussian structural equation models}.
\newblock \bibinfo{journal}{\emph{The Journal of Machine Learning Research}} \bibinfo{volume}{14}, \bibinfo{number}{1} (\bibinfo{year}{2013}), \bibinfo{pages}{111--152}.
\newblock


\bibitem[Kumar et~al\mbox{.}(2012)]%
        {kumar2012sampling}
\bibfield{author}{\bibinfo{person}{Sanjiv Kumar}, \bibinfo{person}{Mehryar Mohri}, {and} \bibinfo{person}{Ameet Talwalkar}.} \bibinfo{year}{2012}\natexlab{}.
\newblock \showarticletitle{Sampling methods for the Nystr{\"o}m method}.
\newblock \bibinfo{journal}{\emph{The Journal of Machine Learning Research}} \bibinfo{volume}{13}, \bibinfo{number}{1} (\bibinfo{year}{2012}), \bibinfo{pages}{981--1006}.
\newblock


\bibitem[Lachapelle et~al\mbox{.}(2019)]%
        {GraN-DAG}
\bibfield{author}{\bibinfo{person}{S{\'e}bastien Lachapelle}, \bibinfo{person}{Philippe Brouillard}, \bibinfo{person}{Tristan Deleu}, {and} \bibinfo{person}{Simon Lacoste-Julien}.} \bibinfo{year}{2019}\natexlab{}.
\newblock \showarticletitle{Gradient-based neural dag learning}.
\newblock \bibinfo{journal}{\emph{arXiv preprint arXiv:1906.02226}} (\bibinfo{year}{2019}).
\newblock


\bibitem[Liu et~al\mbox{.}(2020)]%
        {liu2020gaussian}
\bibfield{author}{\bibinfo{person}{Haitao Liu}, \bibinfo{person}{Yew-Soon Ong}, \bibinfo{person}{Xiaobo Shen}, {and} \bibinfo{person}{Jianfei Cai}.} \bibinfo{year}{2020}\natexlab{}.
\newblock \showarticletitle{When Gaussian process meets big data: A review of scalable GPs}.
\newblock \bibinfo{journal}{\emph{IEEE Transactions on Neural Networks and Learning Systems}} \bibinfo{volume}{31}, \bibinfo{number}{11} (\bibinfo{year}{2020}), \bibinfo{pages}{4405--4423}.
\newblock


\bibitem[Ng et~al\mbox{.}(2020)]%
        {ng2020role}
\bibfield{author}{\bibinfo{person}{Ignavier Ng}, \bibinfo{person}{AmirEmad Ghassami}, {and} \bibinfo{person}{Kun Zhang}.} \bibinfo{year}{2020}\natexlab{}.
\newblock \showarticletitle{On the role of sparsity and dag constraints for learning linear dags}.
\newblock \bibinfo{journal}{\emph{Advances in Neural Information Processing Systems}}  \bibinfo{volume}{33} (\bibinfo{year}{2020}), \bibinfo{pages}{17943--17954}.
\newblock


\bibitem[Ng et~al\mbox{.}(2024)]%
        {ng2024structure}
\bibfield{author}{\bibinfo{person}{Ignavier Ng}, \bibinfo{person}{Biwei Huang}, {and} \bibinfo{person}{Kun Zhang}.} \bibinfo{year}{2024}\natexlab{}.
\newblock \showarticletitle{Structure learning with continuous optimization: A sober look and beyond}. In \bibinfo{booktitle}{\emph{Causal Learning and Reasoning}}. PMLR, \bibinfo{pages}{71--105}.
\newblock


\bibitem[Pearl(2018)]%
        {pearl2018theoretical}
\bibfield{author}{\bibinfo{person}{Judea Pearl}.} \bibinfo{year}{2018}\natexlab{}.
\newblock \showarticletitle{Theoretical impediments to machine learning with seven sparks from the causal revolution}.
\newblock \bibinfo{journal}{\emph{arXiv preprint arXiv:1801.04016}} (\bibinfo{year}{2018}).
\newblock


\bibitem[Pearl and Mackenzie(2018)]%
        {pearl2018book}
\bibfield{author}{\bibinfo{person}{Judea Pearl} {and} \bibinfo{person}{Dana Mackenzie}.} \bibinfo{year}{2018}\natexlab{}.
\newblock \bibinfo{booktitle}{\emph{The book of why: the new science of cause and effect}}.
\newblock \bibinfo{publisher}{Basic books}.
\newblock


\bibitem[Rahimi and Recht(2007)]%
        {rahimi2007random}
\bibfield{author}{\bibinfo{person}{Ali Rahimi} {and} \bibinfo{person}{Benjamin Recht}.} \bibinfo{year}{2007}\natexlab{}.
\newblock \showarticletitle{Random features for large-scale kernel machines}.
\newblock \bibinfo{journal}{\emph{Advances in Neural Information Processing Systems}}  \bibinfo{volume}{20} (\bibinfo{year}{2007}).
\newblock


\bibitem[Ramsey et~al\mbox{.}(2017)]%
        {ramsey2017million}
\bibfield{author}{\bibinfo{person}{Joseph Ramsey}, \bibinfo{person}{Madelyn Glymour}, \bibinfo{person}{Ruben Sanchez-Romero}, {and} \bibinfo{person}{Clark Glymour}.} \bibinfo{year}{2017}\natexlab{}.
\newblock \showarticletitle{A million variables and more: the fast greedy equivalence search algorithm for learning high-dimensional graphical causal models, with an application to functional magnetic resonance images}.
\newblock \bibinfo{journal}{\emph{International journal of data science and analytics}}  \bibinfo{volume}{3} (\bibinfo{year}{2017}), \bibinfo{pages}{121--129}.
\newblock


\bibitem[Ren et~al\mbox{.}(2024a)]%
        {renefficiently}
\bibfield{author}{\bibinfo{person}{Yixin Ren}, \bibinfo{person}{Yewei Xia}, \bibinfo{person}{Hao Zhang}, \bibinfo{person}{Jihong Guan}, {and} \bibinfo{person}{Shuigeng Zhou}.} \bibinfo{year}{2024}\natexlab{a}.
\newblock \showarticletitle{Efficiently Learning Significant Fourier Feature Pairs for Statistical Independence Testing}. In \bibinfo{booktitle}{\emph{The Thirty-eighth Annual Conference on Neural Information Processing Systems}}.
\newblock


\bibitem[Ren et~al\mbox{.}(2024b)]%
        {ren2024learning}
\bibfield{author}{\bibinfo{person}{Yixin Ren}, \bibinfo{person}{Yewei Xia}, \bibinfo{person}{Hao Zhang}, \bibinfo{person}{Jihong Guan}, {and} \bibinfo{person}{Shuigeng Zhou}.} \bibinfo{year}{2024}\natexlab{b}.
\newblock \showarticletitle{Learning Adaptive Kernels for Statistical Independence Tests}. In \bibinfo{booktitle}{\emph{International Conference on Artificial Intelligence and Statistics}}. PMLR, \bibinfo{pages}{2494--2502}.
\newblock


\bibitem[Rolland et~al\mbox{.}(2022)]%
        {SCORE}
\bibfield{author}{\bibinfo{person}{Paul Rolland}, \bibinfo{person}{Volkan Cevher}, \bibinfo{person}{Matth{\"a}us Kleindessner}, \bibinfo{person}{Chris Russell}, \bibinfo{person}{Dominik Janzing}, \bibinfo{person}{Bernhard Sch{\"o}lkopf}, {and} \bibinfo{person}{Francesco Locatello}.} \bibinfo{year}{2022}\natexlab{}.
\newblock \showarticletitle{Score matching enables causal discovery of nonlinear additive noise models}. In \bibinfo{booktitle}{\emph{International Conference on Machine Learning}}. PMLR, \bibinfo{pages}{18741--18753}.
\newblock


\bibitem[Sch{\"o}lkopf et~al\mbox{.}(2021)]%
        {scholkopf2021toward}
\bibfield{author}{\bibinfo{person}{Bernhard Sch{\"o}lkopf}, \bibinfo{person}{Francesco Locatello}, \bibinfo{person}{Stefan Bauer}, \bibinfo{person}{Nan~Rosemary Ke}, \bibinfo{person}{Nal Kalchbrenner}, \bibinfo{person}{Anirudh Goyal}, {and} \bibinfo{person}{Yoshua Bengio}.} \bibinfo{year}{2021}\natexlab{}.
\newblock \showarticletitle{Toward causal representation learning}.
\newblock \bibinfo{journal}{\emph{Proc. IEEE}} \bibinfo{volume}{109}, \bibinfo{number}{5} (\bibinfo{year}{2021}), \bibinfo{pages}{612--634}.
\newblock


\bibitem[Sch{\"o}lkopf and Smola(2002)]%
        {scholkopf2002learning}
\bibfield{author}{\bibinfo{person}{Bernhard Sch{\"o}lkopf} {and} \bibinfo{person}{Alexander~J Smola}.} \bibinfo{year}{2002}\natexlab{}.
\newblock \bibinfo{booktitle}{\emph{Learning with kernels: support vector machines, regularization, optimization, and beyond}}.
\newblock \bibinfo{publisher}{MIT press}.
\newblock


\bibitem[Schwarz(1978)]%
        {schwarz1978estimating}
\bibfield{author}{\bibinfo{person}{Gideon Schwarz}.} \bibinfo{year}{1978}\natexlab{}.
\newblock \showarticletitle{Estimating the dimension of a model}.
\newblock \bibinfo{journal}{\emph{The Annals of Statistics}} (\bibinfo{year}{1978}), \bibinfo{pages}{461--464}.
\newblock


\bibitem[Si et~al\mbox{.}(2017)]%
        {si2017memory}
\bibfield{author}{\bibinfo{person}{Si Si}, \bibinfo{person}{Cho-Jui Hsieh}, {and} \bibinfo{person}{Inderjit~S Dhillon}.} \bibinfo{year}{2017}\natexlab{}.
\newblock \showarticletitle{Memory efficient kernel approximation}.
\newblock \bibinfo{journal}{\emph{Journal of Machine Learning Research}} \bibinfo{volume}{18}, \bibinfo{number}{20} (\bibinfo{year}{2017}), \bibinfo{pages}{1--32}.
\newblock


\bibitem[Sokolova et~al\mbox{.}(2014)]%
        {sokolova2014causal}
\bibfield{author}{\bibinfo{person}{Elena Sokolova}, \bibinfo{person}{Perry Groot}, \bibinfo{person}{Tom Claassen}, {and} \bibinfo{person}{Tom Heskes}.} \bibinfo{year}{2014}\natexlab{}.
\newblock \showarticletitle{Causal discovery from databases with discrete and continuous variables}. In \bibinfo{booktitle}{\emph{Probabilistic Graphical Models: 7th European Workshop, PGM 2014, Utrecht, The Netherlands, September 17-19, 2014. Proceedings 7}}. Springer, \bibinfo{pages}{442--457}.
\newblock


\bibitem[Spirtes et~al\mbox{.}(2001)]%
        {spirtes2001causation}
\bibfield{author}{\bibinfo{person}{Peter Spirtes}, \bibinfo{person}{Clark Glymour}, {and} \bibinfo{person}{Richard Scheines}.} \bibinfo{year}{2001}\natexlab{}.
\newblock \bibinfo{booktitle}{\emph{Causation, prediction, and search}}.
\newblock \bibinfo{publisher}{MIT press}.
\newblock


\bibitem[Spirtes and Zhang(2016)]%
        {spirtes2016causal}
\bibfield{author}{\bibinfo{person}{Peter Spirtes} {and} \bibinfo{person}{Kun Zhang}.} \bibinfo{year}{2016}\natexlab{}.
\newblock \showarticletitle{Causal discovery and inference: concepts and recent methodological advances}. In \bibinfo{booktitle}{\emph{Applied Informatics}}, Vol.~\bibinfo{volume}{3}. Springer, \bibinfo{pages}{1--28}.
\newblock


\bibitem[Sriperumbudur et~al\mbox{.}(2010)]%
        {sriperumbudur2010hilbert}
\bibfield{author}{\bibinfo{person}{Bharath~K Sriperumbudur}, \bibinfo{person}{Arthur Gretton}, \bibinfo{person}{Kenji Fukumizu}, \bibinfo{person}{Bernhard Sch{\"o}lkopf}, {and} \bibinfo{person}{Gert~RG Lanckriet}.} \bibinfo{year}{2010}\natexlab{}.
\newblock \showarticletitle{Hilbert space embeddings and metrics on probability measures}.
\newblock \bibinfo{journal}{\emph{The Journal of Machine Learning Research}}  \bibinfo{volume}{11} (\bibinfo{year}{2010}), \bibinfo{pages}{1517--1561}.
\newblock


\bibitem[Strobl et~al\mbox{.}(2019)]%
        {strobl2019approximate}
\bibfield{author}{\bibinfo{person}{Eric~V Strobl}, \bibinfo{person}{Kun Zhang}, {and} \bibinfo{person}{Shyam Visweswaran}.} \bibinfo{year}{2019}\natexlab{}.
\newblock \showarticletitle{Approximate kernel-based conditional independence tests for fast non-parametric causal discovery}.
\newblock \bibinfo{journal}{\emph{Journal of Causal Inference}} \bibinfo{volume}{7}, \bibinfo{number}{1} (\bibinfo{year}{2019}), \bibinfo{pages}{20180017}.
\newblock


\bibitem[Sun et~al\mbox{.}(2021)]%
        {sun2021nts}
\bibfield{author}{\bibinfo{person}{Xiangyu Sun}, \bibinfo{person}{Oliver Schulte}, \bibinfo{person}{Guiliang Liu}, {and} \bibinfo{person}{Pascal Poupart}.} \bibinfo{year}{2021}\natexlab{}.
\newblock \showarticletitle{Nts-notears: Learning nonparametric dbns with prior knowledge}.
\newblock \bibinfo{journal}{\emph{arXiv preprint arXiv:2109.04286}} (\bibinfo{year}{2021}).
\newblock


\bibitem[Tsamardinos et~al\mbox{.}(2003)]%
        {tsamardinos2003time}
\bibfield{author}{\bibinfo{person}{Ioannis Tsamardinos}, \bibinfo{person}{Constantin~F Aliferis}, {and} \bibinfo{person}{Alexander Statnikov}.} \bibinfo{year}{2003}\natexlab{}.
\newblock \showarticletitle{Time and sample efficient discovery of Markov blankets and direct causal relations}. In \bibinfo{booktitle}{\emph{Proceedings of the ninth ACM SIGKDD International Conference on Knowledge Discovery and Data Mining}}. \bibinfo{pages}{673--678}.
\newblock


\bibitem[Tsamardinos et~al\mbox{.}(2006)]%
        {tsamardinos2006max}
\bibfield{author}{\bibinfo{person}{Ioannis Tsamardinos}, \bibinfo{person}{Laura~E Brown}, {and} \bibinfo{person}{Constantin~F Aliferis}.} \bibinfo{year}{2006}\natexlab{}.
\newblock \showarticletitle{The max-min hill-climbing Bayesian network structure learning algorithm}.
\newblock \bibinfo{journal}{\emph{Machine Learning}}  \bibinfo{volume}{65} (\bibinfo{year}{2006}), \bibinfo{pages}{31--78}.
\newblock


\bibitem[Vowels et~al\mbox{.}(2022)]%
        {vowels2022d}
\bibfield{author}{\bibinfo{person}{Matthew~J Vowels}, \bibinfo{person}{Necati~Cihan Camgoz}, {and} \bibinfo{person}{Richard Bowden}.} \bibinfo{year}{2022}\natexlab{}.
\newblock \showarticletitle{D’ya like dags? a survey on structure learning and causal discovery}.
\newblock \bibinfo{journal}{\emph{Comput. Surveys}} \bibinfo{volume}{55}, \bibinfo{number}{4} (\bibinfo{year}{2022}), \bibinfo{pages}{1--36}.
\newblock


\bibitem[Wang et~al\mbox{.}(2020)]%
        {wang2020survey}
\bibfield{author}{\bibinfo{person}{Meng Wang}, \bibinfo{person}{Weijie Fu}, \bibinfo{person}{Xiangnan He}, \bibinfo{person}{Shijie Hao}, {and} \bibinfo{person}{Xindong Wu}.} \bibinfo{year}{2020}\natexlab{}.
\newblock \showarticletitle{A survey on large-scale machine learning}.
\newblock \bibinfo{journal}{\emph{IEEE Transactions on Knowledge and Data Engineering}} \bibinfo{volume}{34}, \bibinfo{number}{6} (\bibinfo{year}{2020}), \bibinfo{pages}{2574--2594}.
\newblock


\bibitem[Wang et~al\mbox{.}(2024)]%
        {wangoptimal}
\bibfield{author}{\bibinfo{person}{Wenjie Wang}, \bibinfo{person}{Biwei Huang}, \bibinfo{person}{Feng Liu}, \bibinfo{person}{Xinge You}, \bibinfo{person}{Tongliang Liu}, \bibinfo{person}{Kun Zhang}, {and} \bibinfo{person}{Mingming Gong}.} \bibinfo{year}{2024}\natexlab{}.
\newblock \showarticletitle{Optimal Kernel Choice for Score Function-based Causal Discovery}. In \bibinfo{booktitle}{\emph{Forty-first International Conference on Machine Learning}}.
\newblock


\bibitem[Wei et~al\mbox{.}(2020)]%
        {wei2020dags}
\bibfield{author}{\bibinfo{person}{Dennis Wei}, \bibinfo{person}{Tian Gao}, {and} \bibinfo{person}{Yue Yu}.} \bibinfo{year}{2020}\natexlab{}.
\newblock \showarticletitle{DAGs with No Fears: A closer look at continuous optimization for learning Bayesian networks}.
\newblock \bibinfo{journal}{\emph{Advances in Neural Information Processing Systems}}  \bibinfo{volume}{33} (\bibinfo{year}{2020}), \bibinfo{pages}{3895--3906}.
\newblock


\bibitem[Williams and Rasmussen(1995)]%
        {williams1995gaussian}
\bibfield{author}{\bibinfo{person}{Christopher Williams} {and} \bibinfo{person}{Carl Rasmussen}.} \bibinfo{year}{1995}\natexlab{}.
\newblock \showarticletitle{Gaussian processes for regression}.
\newblock \bibinfo{journal}{\emph{Advances in Neural Information Processing Systems}}  \bibinfo{volume}{8} (\bibinfo{year}{1995}).
\newblock


\bibitem[Williams and Seeger(2000)]%
        {williams2000using}
\bibfield{author}{\bibinfo{person}{Christopher Williams} {and} \bibinfo{person}{Matthias Seeger}.} \bibinfo{year}{2000}\natexlab{}.
\newblock \showarticletitle{Using the Nystr{\"o}m method to speed up kernel machines}.
\newblock \bibinfo{journal}{\emph{Advances in Neural Information Processing Systems}}  \bibinfo{volume}{13} (\bibinfo{year}{2000}).
\newblock


\bibitem[Xia et~al\mbox{.}(2023)]%
        {xia2023causal}
\bibfield{author}{\bibinfo{person}{Yewei Xia}, \bibinfo{person}{Hao Zhang}, \bibinfo{person}{Yixin Ren}, \bibinfo{person}{Jihong Guan}, {and} \bibinfo{person}{Shuigeng Zhou}.} \bibinfo{year}{2023}\natexlab{}.
\newblock \showarticletitle{Causal Discovery by Continuous Optimization with Conditional Independence Constraint: Methodology and Performance}. In \bibinfo{booktitle}{\emph{2023 IEEE International Conference on Data Mining (ICDM)}}. IEEE, \bibinfo{pages}{668--677}.
\newblock


\bibitem[Yang et~al\mbox{.}(2012)]%
        {yang2012nystrom}
\bibfield{author}{\bibinfo{person}{Tianbao Yang}, \bibinfo{person}{Yu-Feng Li}, \bibinfo{person}{Mehrdad Mahdavi}, \bibinfo{person}{Rong Jin}, {and} \bibinfo{person}{Zhi-Hua Zhou}.} \bibinfo{year}{2012}\natexlab{}.
\newblock \showarticletitle{Nystr{\"o}m method vs random fourier features: A theoretical and empirical comparison}.
\newblock \bibinfo{journal}{\emph{Advances in Neural Information Processing Systems}}  \bibinfo{volume}{25} (\bibinfo{year}{2012}).
\newblock


\bibitem[Yu et~al\mbox{.}(2019)]%
        {yu2019dag}
\bibfield{author}{\bibinfo{person}{Yue Yu}, \bibinfo{person}{Jie Chen}, \bibinfo{person}{Tian Gao}, {and} \bibinfo{person}{Mo Yu}.} \bibinfo{year}{2019}\natexlab{}.
\newblock \showarticletitle{DAG-GNN: DAG structure learning with graph neural networks}. In \bibinfo{booktitle}{\emph{International Conference on Machine Learning}}. PMLR, \bibinfo{pages}{7154--7163}.
\newblock


\bibitem[Yuan and Malone(2013)]%
        {yuan2013learning}
\bibfield{author}{\bibinfo{person}{Changhe Yuan} {and} \bibinfo{person}{Brandon Malone}.} \bibinfo{year}{2013}\natexlab{}.
\newblock \showarticletitle{Learning optimal Bayesian networks: A shortest path perspective}.
\newblock \bibinfo{journal}{\emph{Journal of Artificial Intelligence Research}}  \bibinfo{volume}{48} (\bibinfo{year}{2013}), \bibinfo{pages}{23--65}.
\newblock


\bibitem[Zhang et~al\mbox{.}(2024a)]%
        {zhang2024towards}
\bibfield{author}{\bibinfo{person}{Hao Zhang}, \bibinfo{person}{Yixin Ren}, \bibinfo{person}{Yewei Xia}, \bibinfo{person}{Shuigeng Zhou}, {and} \bibinfo{person}{Jihong Guan}.} \bibinfo{year}{2024}\natexlab{a}.
\newblock \showarticletitle{Towards Effective Causal Partitioning by Edge Cutting of Adjoint Graph}.
\newblock \bibinfo{journal}{\emph{IEEE Transactions on Pattern Analysis and Machine Intelligence}} \bibinfo{volume}{46}, \bibinfo{number}{12} (\bibinfo{year}{2024}), \bibinfo{pages}{10259--10271}.
\newblock


\bibitem[Zhang et~al\mbox{.}(2012)]%
        {zhang2012kernel}
\bibfield{author}{\bibinfo{person}{Kun Zhang}, \bibinfo{person}{Jonas Peters}, \bibinfo{person}{Dominik Janzing}, {and} \bibinfo{person}{Bernhard Sch{\"o}lkopf}.} \bibinfo{year}{2012}\natexlab{}.
\newblock \showarticletitle{Kernel-based conditional independence test and application in causal discovery}.
\newblock \bibinfo{journal}{\emph{arXiv preprint arXiv:1202.3775}} (\bibinfo{year}{2012}).
\newblock


\bibitem[Zhang et~al\mbox{.}(2018b)]%
        {zhang2018learning}
\bibfield{author}{\bibinfo{person}{Kun Zhang}, \bibinfo{person}{Bernhard Sch{\"o}lkopf}, \bibinfo{person}{Peter Spirtes}, {and} \bibinfo{person}{Clark Glymour}.} \bibinfo{year}{2018}\natexlab{b}.
\newblock \showarticletitle{Learning causality and causality-related learning: some recent progress}.
\newblock \bibinfo{journal}{\emph{National Science Review}} \bibinfo{volume}{5}, \bibinfo{number}{1} (\bibinfo{year}{2018}), \bibinfo{pages}{26--29}.
\newblock


\bibitem[Zhang et~al\mbox{.}(2024b)]%
        {zhang2024causal}
\bibfield{author}{\bibinfo{person}{Kun Zhang}, \bibinfo{person}{Shaoan Xie}, \bibinfo{person}{Ignavier Ng}, {and} \bibinfo{person}{Yujia Zheng}.} \bibinfo{year}{2024}\natexlab{b}.
\newblock \showarticletitle{Causal representation learning from multiple distributions: A general setting}.
\newblock \bibinfo{journal}{\emph{arXiv preprint arXiv:2402.05052}} (\bibinfo{year}{2024}).
\newblock


\bibitem[Zhang et~al\mbox{.}(2018a)]%
        {zhang2018large}
\bibfield{author}{\bibinfo{person}{Qinyi Zhang}, \bibinfo{person}{Sarah Filippi}, \bibinfo{person}{Arthur Gretton}, {and} \bibinfo{person}{Dino Sejdinovic}.} \bibinfo{year}{2018}\natexlab{a}.
\newblock \showarticletitle{Large-scale kernel methods for independence testing}.
\newblock \bibinfo{journal}{\emph{Statistics and Computing}}  \bibinfo{volume}{28} (\bibinfo{year}{2018}), \bibinfo{pages}{113--130}.
\newblock


\bibitem[Zhang et~al\mbox{.}(2022)]%
        {zhang2022truncated}
\bibfield{author}{\bibinfo{person}{Zhen Zhang}, \bibinfo{person}{Ignavier Ng}, \bibinfo{person}{Dong Gong}, \bibinfo{person}{Yuhang Liu}, \bibinfo{person}{Ehsan Abbasnejad}, \bibinfo{person}{Mingming Gong}, \bibinfo{person}{Kun Zhang}, {and} \bibinfo{person}{Javen~Qinfeng Shi}.} \bibinfo{year}{2022}\natexlab{}.
\newblock \showarticletitle{Truncated matrix power iteration for differentiable dag learning}.
\newblock \bibinfo{journal}{\emph{Advances in Neural Information Processing Systems}}  \bibinfo{volume}{35} (\bibinfo{year}{2022}), \bibinfo{pages}{18390--18402}.
\newblock


\bibitem[Zheng et~al\mbox{.}(2018a)]%
        {zheng2018dags}
\bibfield{author}{\bibinfo{person}{Xun Zheng}, \bibinfo{person}{Bryon Aragam}, \bibinfo{person}{Pradeep~K Ravikumar}, {and} \bibinfo{person}{Eric~P Xing}.} \bibinfo{year}{2018}\natexlab{a}.
\newblock \showarticletitle{Dags with no tears: Continuous optimization for structure learning}.
\newblock \bibinfo{journal}{\emph{Advances in Neural Information Processing Systems}}  \bibinfo{volume}{31} (\bibinfo{year}{2018}).
\newblock


\bibitem[Zheng et~al\mbox{.}(2018b)]%
        {Notears}
\bibfield{author}{\bibinfo{person}{Xun Zheng}, \bibinfo{person}{Bryon Aragam}, \bibinfo{person}{Pradeep~K Ravikumar}, {and} \bibinfo{person}{Eric~P Xing}.} \bibinfo{year}{2018}\natexlab{b}.
\newblock \showarticletitle{DAGs with NO TEARS: Continuous Optimization for Structure Learning}. In \bibinfo{booktitle}{\emph{Advances in Neural Information Processing Systems}}, \bibfield{editor}{\bibinfo{person}{S.~Bengio}, \bibinfo{person}{H.~Wallach}, \bibinfo{person}{H.~Larochelle}, \bibinfo{person}{K.~Grauman}, \bibinfo{person}{N.~Cesa-Bianchi}, {and} \bibinfo{person}{R.~Garnett}} (Eds.), Vol.~\bibinfo{volume}{31}. \bibinfo{publisher}{Curran Associates, Inc.}
\newblock
\urldef\tempurl%
\url{https://proceedings.neurips.cc/paper_files/paper/2018/file/e347c51419ffb23ca3fd5050202f9c3d-Paper.pdf}
\showURL{%
\tempurl}


\bibitem[Zheng et~al\mbox{.}(2020)]%
        {zheng2020learning}
\bibfield{author}{\bibinfo{person}{Xun Zheng}, \bibinfo{person}{Chen Dan}, \bibinfo{person}{Bryon Aragam}, \bibinfo{person}{Pradeep Ravikumar}, {and} \bibinfo{person}{Eric Xing}.} \bibinfo{year}{2020}\natexlab{}.
\newblock \showarticletitle{Learning sparse nonparametric dags}. In \bibinfo{booktitle}{\emph{International Conference on Artificial Intelligence and Statistics}}. Pmlr, \bibinfo{pages}{3414--3425}.
\newblock


\bibitem[Zheng et~al\mbox{.}(2024)]%
        {zheng2024causal}
\bibfield{author}{\bibinfo{person}{Yujia Zheng}, \bibinfo{person}{Biwei Huang}, \bibinfo{person}{Wei Chen}, \bibinfo{person}{Joseph Ramsey}, \bibinfo{person}{Mingming Gong}, \bibinfo{person}{Ruichu Cai}, \bibinfo{person}{Shohei Shimizu}, \bibinfo{person}{Peter Spirtes}, {and} \bibinfo{person}{Kun Zhang}.} \bibinfo{year}{2024}\natexlab{}.
\newblock \showarticletitle{Causal-learn: Causal discovery in python}.
\newblock \bibinfo{journal}{\emph{Journal of Machine Learning Research}} \bibinfo{volume}{25}, \bibinfo{number}{60} (\bibinfo{year}{2024}), \bibinfo{pages}{1--8}.
\newblock


\end{thebibliography}

%%
%% If your work has an appendix, this is the place to put it.
\appendix

\section{Details of Experiment Settings}
\label{appendix: exsettings}
In this appendix, we provide the details of synthetic dataset and the implementation details of compared methods.

\subsection{Details of Synthetic Data}
In the experiment on synthetic data, the generated graph consists of a total of $7$ variables. Based on the given graph density,
we randomly generate the corresponding number of edges in the graph. The leaf node $X_{leaf}$ in the graph, which has an
in-degree of $0$, follows either a normal distribution $\mathcal{N}(0, 1)$ or a uniform distribution $U(-0.5, 0.5)$ with equal probability. For
each response variable $X_i$ in the graph, its data is generated according to the following causal relation:
\begin{equation}
\label{pnl}
X_i = g_i(f_i(\text{Pa}_i)+\epsilon_i)
\end{equation}
where $\text{Pa}_i$ represents the parent nodes of $X_i$, and $f_i$ denotes the causal mechanism by which these parents affect the node. $f_i$ is selected with equal probability from $\{\text{linear}$, $\sin$, $\cos$, $\tanh$, $\log\}$, where the weight of the linear function ranges from $[0,1.5]$. $g_i$ is equally sampled from $\{\text{linear}, \exp, x^\alpha\}$, where the weight of the linear function ranges from $[1,2]$, and $\alpha$ in $x^\alpha$ is randomly selected from $\{1, 2, 3\}$. And $\epsilon_i$ is additive noise, which either follows a uniform distribution $U(-0.25, 0.25)$ or a normal distribution $\mathcal{N}(0, 0.5)$.

For mixed data that includes both continuous and discrete variables, we first generate the continuous data and then discretize it. We select 50\% of the variables for equal-frequency discretization, with the data values ranging from $1$ to $5$.

For multi-dimensional variables, each variable is randomly assigned a dimension, ranging from $1$ to $5$. The generation for leaf nodes follows the same method as before. For the remaining variables, take $X_i$ as an example, we first multiply its parent nodes $\text{Pa}_{i}$ by a matrix of all ones with the shape (dimension of $\text{Pa}_{i}$, dimension of $X_i$). This ensures that the parent nodes are transformed into a matrix that matches the $X_i$'s dimension. We then use Eq.~(\ref{pnl}) to generate the corresponding variable.

\subsection{Implementation Details}

For score-based methods, we employ the GES algorithm, leveraging the implementation provided in the \texttt{causal-learn} package~\footnote{\url{https://github.com/py-why/causal-learn}}. For a fair comparison, all methods are evaluated with the same data instances and hardware environments. The experiments are conducted on a system with an Intel Core i9-10900T processor (10 cores, 20 threads) and an NVIDIA RTX 3080 GPU. 

\noindent \textbf{Baselines. }
We use most existing implementations for the compared methods except the SC and MM-MB. For SC, MM-MB, since no official code exists, we implement them by ourselves. 
\begin{itemize}[leftmargin=5mm]
\item \textbf{SC}: SC~\cite{sokolova2014causal} is a classic score that can handle mixtures of discrete and continuous variables in Bayesian network structure learning. We have implemented an adaptation of SC that approximates mutual information by substituting Pearson correlation with Spearman correlation in the BIC score calculation, thus capturing monotonic relationships between variables.
\item \textbf{MM}: MM-MB~\cite{tsamardinos2003time} is a well-established algorithm for identifying Markov blankets. To achieve global causal discovery, we extend the Markov blanket identification from the target variable to all variables globally and use conditional independence tests to distinguish its spouse nodes, thereby uncovering the complete causal graph. To handle linear and nonlinear causal relationships as well as data from various distributions, we use KCI~\cite{zhang2012kernel} for the CI tests. Additionally, we apply symmetry correction to enhance its performance.
\item  \textbf{PC}: PC~\cite{spirtes2001causation} is a widely-used search algorithm. And we use the kernel-based conditional independence,
KCI~\cite{zhang2012kernel}, to test conditional independence relationships. The KCI test can handle nonlinear causal
relations and data from arbitrary distributions. The implementations of PC and KCI are based on causal-learn package.
We select a significance level of $\alpha = 0.05$. The median heuristic for the kernel bandwidth are used. 
\item \textbf{CV}: CV likelihood~\cite{huang2018generalized} is based on the RKHS regression model and uses cross-validation to avoid overfitting. Its code is provided in causal-learn package. We use the default setting with $10$-fold cross validation and set the regularization parameter $\lambda = 0.01$. Also, we set the parameter $\gamma = 0.01$ that used to ensure the positive-definite of the estimate covariance matrix.
\end{itemize}

\noindent \textbf{The code} for the experiments is given in the link~\url{https://github.com/zhc-hydrion/2025KDD_CV_LR}.

% \begin{figure}
% \centering
% \includegraphics[width=8.1cm]{./Figure/syn500_nym100.pdf}
% \caption{The F1/SHD score of recovered causal graphs with sample size $n=500$. Left: continuous data. Middle: mixed continuous and discrete data. Right: multi-dimensional data.}
% \label{fig:synthetic_500}
% \end{figure}
\section{Additional Experiment Results} 
\label{appendix: addiexpe}
In this appendix, we provide the details of experiment results in Sec.~\ref{subsec:samplingparameterchoice}, and the results on the synthetic data of size $500$. 

\subsection{Results on Sampling Parameter Choice}
\label{appendix: sub: Sampling Parameter Choice}
Here, we present detailed experimental results as discussed in Sec.~\ref{subsec:samplingparameterchoice}. In the main paper, we state the relative error is controlled to be no more than $0.5\%$ in all settings. The complete results are shown in Table.~\ref{tab:scores}. In cases where the data structure is simpler (both in the discrete case and in the continuous case with an empty condition set), the approximation performs very well, with a relative error less than $0.1\%$ across all sample sizes.

\begin{table}[h]
    \caption{The score results of CV and CV-LR under the settings in Sec.~\ref{subsec:samplingparameterchoice}. The maximal rank parameter $m$ is set to $100$. }
    \label{tab:scores}
    \centering
    \small
    \renewcommand\arraystretch{1.0}
    \scalebox{0.83}{
    \begin{tabular}{c|c|rrc}
        \toprule
        \textbf{Setting} & \textbf{$n$} & \textbf{CV score} & \textbf{CV-LR score} & \textbf{Relative error $(\%)$} \\
        \midrule
        \multirow{5}{1.0cm}{Continu. \\ $|Z|=0$} 
                   & 200 & 433.24125237132  & 433.24125162285 & <0.10 \\
                   & 500 & 2459.28117262525 & 2459.28117233407 & <0.10 \\
                   & 1000 & 9504.85766520591 & 9509.65871874661 & <0.10\\
                   & 2000 & 37386.49855593080 & 37406.16087222520 & <0.10 \\
                   & 4000 & 148271.47265144500 & 148322.40624804000 & <0.10 \\
        \midrule
        \multirow{5}{1.0cm}{Discrete \\ $|Z|=0$}
                   & 200 & 387.037158799239  & 387.03715879924  & <0.10 \\
                   & 500 & 2346.97607926954 & 2346.97607926954  & <0.10 \\
                   & 1000 & 9288.81581541630 & 9288.81581541630  & <0.10 \\
                   & 2000 & 36956.57345472810 & 36956.57345472810 & <0.10 \\
                   & 4000 & 147424.72166874500 & 147424.72166874500 & <0.10 \\
        \midrule
        \multirow{5}{1.0cm}{Continu. \\ $|Z|=6$}
                   & 200 & 437.94140907312 & 435.78203964329 & 0.50 \\
                   & 500 & 2449.31255885416 & 2446.87857237056 & 0.10 \\
                   & 1000 & 9476.68972889562 & 9489.42354423989 & 0.13 \\
                   & 2000 & 37320.18405900620 & 37390.72738080120 & 0.19 \\
                   & 4000 & 148134.19732984900 & 148319.96250081400 & 0.13 \\
        \midrule
          \multirow{5}{1.0cm}{Discrete \\ $|Z|=6$}
                   & 200 & 385.43869263321 & 385.37209674137 & <0.10 \\
                   & 500 & 2339.06660923831  & 2339.01611428266 & <0.10 \\
                   & 1000 & 9273.87967988472  & 9273.85932786237 & <0.10 \\
                   & 2000 & 36923.38881587080 & 36923.58768460090 & <0.10 \\
                   & 4000 & 147356.13710583600 & 147356.35657453700 & <0.10 \\
        \bottomrule
    \end{tabular}}
\end{table}

\subsection{Comparison with More Approaches}
In this section, we present results by including additional comparison methods, primarily focusing on continuous optimization-based approaches. Specifically, we evaluate GraN-DAG~\cite{GraN-DAG}, NOTEARS~\cite{Notears}, DAGMA~\cite{DAGMA}, and SCORE~\cite{SCORE}.  GraN-DAG, NOTEARS, and DAGMA utilize differentiable optimization frameworks to infer directed acyclic graphs (DAGs) from continuous data. They achieve this by employing functions to enforce acyclicity and optimizing the adjacency matrix of graphs to minimize their respective score criteria. In contrast, SCORE, a score-based method, leverages non-linear additive noise models and approximates the Jacobian of the data distribution's score function to recover causal structures. 

We conduct experiments on the SACHS dataset under the setting $n=2000$. The implementation details and hyperparameters for each comparison method are summarized below:
\begin{itemize}[leftmargin=5mm]
\item \textbf{NOTEARS}~\footnote{\url{https://github.com/xunzheng/notears}}: We adopted the same hyper-parameters recommended in the original paper.  Specifically, the graph threshold was set to 0.3. And we set \( h_{\text{tol}} = 1 \times 10^{-8} \), \( \rho_{\text{max}} = 1 \times 10^{16} \). The $l_1-$ and $l_2-$ penalty parameters were set to \( \lambda_1 = 0.01 \) and \( \lambda_2 = 0.01 \), respectively.
\item \textbf{DAGMA}~\footnote{\url{https://github.com/kevinsbello/dagma}}: We used the default parameter settings provided in the implementation.  The $l_1-$ and $l_2-$ regularization parameters were set to \(\lambda_1 = 0\) and \(\lambda_2 = 0.005\), respectively.
\item \textbf{GraN-DAG}~\footnote{\url{https://github.com/huawei-noah/trustworthyAI/tree/master/gcastle}}: The default settings were applied. The model was configured with 2 hidden layers, each containing 10 neurons, and used the "leaky-relu" activation function. The convergence threshold was set to \( h_{\text{threshold}} = 1 \times 10^{-7} \).
\item \textbf{SCORE}~\footnote{\url{https://github.com/paulrolland1307/SCORE}}: We used the default parameter settings.
\end{itemize}

\begin{table}[h!]
\centering
\caption{The results on the SACHS ($n=2000$). The SCORE method can not handle this setting, we marked as $-$ instead.}
\label{tab:discrete2000}
\begin{tabular}{l|cc}
\hline
\textbf{Method} & \textbf{F1 Score} ($~\uparrow~$) & \textbf{SHD Score} ($~\downarrow~$) \\ \hline
SCORE~\cite{SCORE}          & $-$            & $-$        \\
GraN-DAG~\cite{GraN-DAG}        & 0.27            & 0.25       \\
NOTEARS~\cite{Notears}         & 0.19            & 0.27       \\
DAGMA~\cite{DAGMA}           & 0.42            & 0.24       \\ 
\rowcolor{Gray} CV-LR (Ours)   & \textbf{0.94}              & \textbf{0.10}         \\
\hline
\end{tabular}
\end{table}

Each experiment was repeated 10 times, and the average results were reported. The results are presented in Tab.~\ref{tab:discrete2000}, demonstrating that our method significantly outperforms the comparison methods. In contrast, the comparison methods face notable challenges, resulting in poor performance. This can be attributed to their design, which primarily targets continuous data scenarios. As a result, the optimization procedures fail to converge effectively in discrete cases, leading to suboptimal outcomes. Similarly, SCORE struggles due to a mismatch between its underlying assumptions and the characteristics of the discretized data distribution. These results highlight the robustness and adaptability of our approach in handling various data distribution scenarios.

\subsection{Additional Results on More Datasets}
In the main paper, we give results on a discrete version of SACHS, and to further validate the generality of our method, we also evaluated several methods on the continuous version of the SACHS dataset, which consists of 853 samples in its original continuous form without any preprocessing or discretization. Apart from the dataset itself, all experimental settings remained consistent with those in the previous experiments. Each experiment was repeated $10$ times, and the average results were reported.

The average SHD scores for the different methods are summarized in Tab.~\ref{tab:shd_853}. Our method demonstrates a significant advantage on this version of the SACHS dataset as well, showcasing its applicability to various data types. Furthermore, comparing the results of CV and CV-LR reveals that our approximation method maintains excellent performance and achieves results comparable to CV.
\begin{table}[h!]
\centering
\caption{The SHD score results on the continuous version of the SACHS dataset. The sample size is $n = 853$. }
\label{tab:shd_853}
\begin{tabular}{l|c}
\hline
\textbf{Method} & \textbf{SHD Score} ($~\downarrow~$) \\ \hline
SCORE~\cite{SCORE}           & 0.2182      \\
GraN-DAG~\cite{GraN-DAG}        & 0.2727    \\
NOTEARS~\cite{Notears}         & 0.2364      \\
DAGMA~\cite{DAGMA}           & 0.3091      \\
PC~\cite{spirtes2001causation}              & 0.2182      \\ \hline
CV              & \textbf{0.1818}      \\
\rowcolor{Gray} CV-LR (Ours)          & \textbf{0.1818}      \\
\hline
\end{tabular}
\end{table}

In summary, these results further validate the generality of our approach, fulfilling the goal of acceleration without sacrificing accuracy in real-world scenarios, highlighting its ability to deliver superior performance in a variety of data scenarios.

\end{document}